\newenvironment{packed_enum}{
\begin{enumerate}
\setlength{\itemsep}{1pt}
\setlength{\parskip}{0pt}
\setlength{\parsep}{0pt}
}{\end{enumerate}}
\newcommand{\version}{arxiv}
\begin{document}
\title{On Estimating $L_2^2$ Divergence}

\author[1]{
Akshay Krishnamurthy
\thanks{akshaykr@cs.cmu.edu}}

\author[2]{
Kirthevasan Kandasamy
\thanks{kandasamy@cs.cmu.edu}}

\author[2]{
\\Barnab\'{a}s P\'{o}czos
\thanks{bapoczos@cs.cmu.edu}}

\author[3]{
Larry Wasserman
\thanks{larry@stat.cmu.edu}}

\affil[1]{Computer Science Department\\
Carnegie Mellon University}
\affil[2]{Machine Learning Department\\
Carnegie Mellon University}
\affil[3]{Statistics Department\\
Carnegie Mellon University}

\maketitle

\begin{abstract}
We give a comprehensive theoretical characterization of a nonparametric estimator for the $L_2^2$ divergence between two continuous distributions. 
We first bound the rate of convergence of our estimator, showing that it is $\sqrt{n}$-consistent provided the densities are sufficiently smooth.
In this smooth regime, we then show that our estimator is asymptotically normal, construct asymptotic confidence intervals, and establish a Berry-Ess\'{e}en style inequality characterizing the rate of convergence to normality.
We also show that this estimator is minimax optimal.

\end{abstract}

\ifthenelse{\equal{\version}{arxiv}}{\section{Introduction}}{\section{INTRODUCTION}}
\label{sec:intro}

One of the most natural ways to quantify the dis-similarity between two continuous distributions is with the $L_2$-distance between their densities. 
This distance -- which we typically call a divergence -- allows us to translate intuition from Euclidean geometry and consequently makes the $L_2$-divergence particularly interpretable.
Despite this appeal, we know of very few methods for estimating the $L_2$-divergence from data. 
For the estimators that do exist, we have only a limited understanding of their properties, which limits their applicability.
This paper addresses this lack of understanding with a comprehensive theoretical study of an estimator for the $L_2^2$-divergence.

Our estimator is the same kernel multi-sample $U$-statistic that has appeared numerous times in the literature~\cite{anderson1994two,gine2008simple}, but has, until now, lacked a complete theoretical development. 
Under a standard smoothness assumption, parameterized by $\beta$, and given $n$ samples from two densities supported over $\RR^d$, we establish the following properties.
\ifthenelse{\equal{\version}{arxiv}}{\begin{packed_enum}}
{\begin{enumerate}}
\item We analyze the rate of convergence in squared error, showing an $n^{\frac{-8\beta}{4\beta+d}}$ rate if $\beta < d/4$ and the parametric $n^{-1}$ rate if $\beta \ge d/4$ (Theorem~\ref{thm:rate}).
\item When $\beta > d/4$, we prove that the estimator is asymptotically normal (Theorem~\ref{thm:normality}). 
\item 
We derive a principled method for constructing a confidence interval that we justify with asymptotic arguments (Theorem~\ref{thm:asymp_conf}).
\item We also prove a Berry-Ess\'{e}en style inequality in the $\beta > d/4$ regime, characterizing the distance of the appropriately normalized estimator to the $\Ncal(0,1)$ limit (Theorem~\ref{thm:berry}). 
\item Lastly, we modify an existing proof to establish a matching lower bound on the rate of convergence (Theorem~\ref{thm:l2_lower}). 
  This shows that our estimator achieves the minimax rate.
\ifthenelse{\equal{\version}{arxiv}}{\end{packed_enum}}
{\end{enumerate}}
We are not aware of such a characterization of an estimator for this divergence.
Indeed, we are not aware of such a precise characterization for \emph{any} nonparametric divergence estimators.

The most novel technical ingredient of our work is the proof of Theorem~\ref{thm:berry}, where we upper bound the distance to the $\Ncal(0,1)$ limit of our estimator.
The challenges in this upper bound involve carefully controlling the bias in both our estimator and our estimator for its asymptotic variance so that we can appeal to classical Berry-Ess\'{e}en bounds. 
This technical obstacle arises in many nonparametric settings, but we are not aware of any related results. 

The remainder of this paper is organized as follows.
After mentioning some related ideas in Section~\ref{sec:related}, we specify the estimator of interest in Section~\ref{sec:estimator}.
In Section~\ref{sec:theory}, we present the main theoretical results, deferring proofs to Section~\ref{sec:proofs} and the appendix.
We conclude in Section~\ref{sec:discussion} with some future directions.

\ifthenelse{\equal{\version}{arxiv}}{\section{Related Work}}{\section{RELATED WORK}}
\label{sec:related}


There are a few other works that have considered estimation of the $L_2$-divergence under non-parametric assumptions~\cite{anderson1994two,poczos2011estimation,krishnamurthy2014nonparametric}.
Anderson et al. propose essentially the same estimator that we analyze in this paper~\cite{anderson1994two}.
When used for two-sample testing, they argue that one should not shrink the bandwidth with $n$, as it does not lend additional power to the test, while only increasing the variance. 
Unfortunately, this choice of bandwidth does not produce a consistent estimator.
When used for estimation, they remark that one should use a bandwidth that is smaller than for density estimation, but do not pursue this idea further.
By formalizing this undersmoothing argument, we achieve the parametric $n^{-1}$ squared error rate.



Poczos et al. establish consistency of a nearest neighbor based $L_2$ divergence estimator, but do not address the rate of convergence or other properties~\cite{poczos2011estimation}. 
Krishnamurthy et al. propose an estimator based on a truncated Fourier expansion of the densities~\cite{krishnamurthy2014nonparametric}.
They establish a rate of convergence that we match, but do not develop any additional properties.
Similarly, K\"{a}llberg and Seleznjev propose an estimator based on $\epsilon$-nearest neighbors and prove similar asymptotic results to ours, but they do not establish Berry-Esse\'{e}n or minimax lower bounds~\cite{kallberg2012estimation}.
In contrast to both of these works, our estimator and our analysis are considerably simpler, which facilitates both applicability and theoretical development.

As will become clear in the sequel, our estimator is closely related to the maximum mean discrepancy (MMD) for which we have a fairly deep understanding~\cite{gretton2012kernel}.
While the estimators are strikingly similar, they are motivated from vastly different lines of reasoning and the analysis reflects this difference. 
The most notable difference is that with MMD, the population quantity is \emph{defined} by the kernel and bandwidth. 
That is, the choice of kernel influences not only the estimator but also the population quantity.
We believe that our estimand is more interpretable as it is independent of the practioner's choices.
Nevertheless, some of our results, notably the Berry-Ess\'{e}en bound, can be ported to an estimate of the MMD, advancing our understanding of this quantity.

There is a growing body of literature on estimation of various divergences under nonparametric assumptions. 
This line of work has primarily focused on Kullback-Leibler, Renyi-$\alpha$, and Csiszar $f$-divergences~\cite{leonenko2008class,nguyen2010estimating,perez2008kullback}. 
As just one example, Nguyen et al. develop a convex program to estimate $f$-divergences under the assumption that the density ratio belongs to a reproducing kernel Hilbert space. 
Unfortunately, we have very little understanding as to which divergence is best suited to a particular problem, so it is important to have an array of estimators at our disposal.

Moreover, apart from a few examples, we do not have a complete understanding of the majority of these estimators.
In particular, except for the MMD~\cite{gretton2012kernel}, we are unaware of principled methods for building confidence intervals for any of these divergences, and this renders the theoretical results somewhat irrelevant for testing and other inference problems.


Our estimator is based on a line of work studying the estimation of integral functionals of a density in the nonparametric setting~\cite{gine2008simple,laurent1996efficient,kerkyacharian1996estimating,birge1995estimation,bickel1988estimating}.
These papers consider estimation of quantities of the form $\theta = \int f(p, p^{(1)}, \ldots, p^{(k)}) d\mu$, where $f$ is some known functional and $p^{(i)}$ is the $i$th derivative of the density $p$, given a sample from $p$.
Gin\'{e} and Nickl specifically study estimation of $\int p(x)^2d\mu$ and our work generalizes their results to the $L_2^2$-divergence functional~\cite{gine2008simple}. 

Turning to lower bounds, while we are not aware of a lower bound for $L_2^2$-divergence estimation under nonparametric assumptions, there are many closely related results.
For example, Birge and Massart~\cite{birge1995estimation} establish lower bounds on estimating integral functionals of a single density, while Krishnamurthy et al. extend their proof to a class of divergences~\cite{krishnamurthy2014nonparametric}.
Our lower bound is based on some modifications to the proof of Krishnamurthy et al.

\ifthenelse{\equal{\version}{arxiv}}{\section{The Estimator}}{\section{THE ESTIMATOR}}
\label{sec:estimator}
Let $\PP$ and $\QQ$ be two distributions supported over $\RR^d$ with Radon-Nikodym derivatives (densities) $p \triangleq d\PP/d\mu, q \triangleq d\QQ/d\mu$ with respect to a measure $\mu$.
The $L_2^2$ divergence between these two distributions, denoted throughout this paper as $D(p,q)$ is defined as:
\ifthenelse{\equal{\version}{arxiv}}{
\begin{align*}
D(p,q) \triangleq \int (p(x) - q(x))^2 d\mu(x) = \underbrace{\int p^2(x)d\mu}_{\theta_p} + \underbrace{\int q^2(x)d\mu}_{\theta_q} - 2 \underbrace{\int p(x)q(x)d\mu}_{\theta_{p,q}}.
\end{align*}
}
{
\begin{align*}
D(p,q) &\triangleq \int (p(x) - q(x))^2 d\mu(x) \\
&= \underbrace{\int p^2(x)d\mu}_{\theta_p} + \underbrace{\int q^2(x)d\mu}_{\theta_q} - 2 \underbrace{\int p(x)q(x)d\mu}_{\theta_{p,q}}.
\end{align*}
}
Estimation of the first two terms in the decomposition has been extensively studied in the nonparametric statistics community~\cite{laurent1996efficient,birge1995estimation,gine2008simple,bickel1988estimating}.
For these terms, we use the kernel-based U-statistic of Gine and Nickl~\cite{gine2008simple}.
For the bilinear term, $\theta_{p,q}$, we use a natural adaptation of their U-statistic to the multi-sample setting.
Specifically, given samples $\{X_i\}_{i=1}^{2n}\sim p, \{Y_i\}_{i=1}^{2n} \sim q$, we estimate $\theta_p$ with $\hat{\theta}_p$ and $\theta_{p,q}$ with $\hat{\theta}_{p,q}$, given by:
\ifthenelse{\equal{\version}{arxiv}}{
\begin{align}
\hat{\theta}_p = \frac{1}{n(n-1)}\sum_{i \ne j=1}^n \frac{1}{h^d}K\left(\frac{X_i-X_j}{h}\right)\qquad
\hat{\theta}_{p,q} = \frac{1}{n^2}\sum_{i,j=n+1}^{2n} \frac{1}{h^d}K\left(\frac{X_i - Y_j}{h}\right),
\end{align}
}
{
\begin{align}
\hat{\theta}_p &= \frac{1}{n(n-1)}\sum_{i \ne j=1}^n \frac{1}{h^d}K\left(\frac{X_i-X_j}{h}\right)\\
\hat{\theta}_{p,q} &= \frac{1}{n^2}\sum_{i,j=n+1}^{2n} \frac{1}{h^d}K\left(\frac{X_i - Y_j}{h}\right),
\end{align}
}
where $K: \RR^d \rightarrow \RR_{\ge 0}$ is a kernel function and $h \in \RR_{\ge 0}$ is a bandwidth parameter.
In Assumption~\ref{assump:main} below, we prescibe some standard restrictions on the kernel and a scaling of the bandwidth.

The squared term involving $q$, $\theta_q$, is estimated analogously to $\theta_p$, and we denote the estimator $\hat{\theta}_q$. 
The final $L_2^2$-divergence estimator is simply $\hat{D}(p,q) = \hat{\theta}_p + \hat{\theta}_q - 2 \hat{\theta}_{p,q}$. 
Notice that we have split the data so that each point $X_i$ (respectively $Y_j$) is used in exactly one term.
Forcing this independence will simplify our theoretical analysis without compromising the properties. 

While data-splitting facilitates our theoretical analysis, for some applications, we recommend against it as it does not make effective use of the sample.
It is straightforward to prove the same rate of convergence for the estimator without data splitting.
As a consequence, some applications, such as machine learning on distributions~\cite{poczos2012nonparametric}, may not require splitting the sample.
However, asymptotic normality, the confidence interval and the Berry-Ess\'{e}en bound do rely crucially on the data-splitting technique, so it is necessary to split the sample for most inference problems.

In fact, without data-splitting, the limiting distribution is \emph{not} always normal.
When $p=q$, which is the relevant setting for two-sample testing, Gretton \emph{et al.} show that the limiting distribution for the $U$-statistic MMD estimator is a weighted sum of products of gaussian random variables~\cite{gretton2012kernel}.
Essentially the same argument applies here, showing that data-splitting is critical for our asymptotic results.

We also remark that the estimator can naively be computed in quadratic time. 
However, with a compact kernel, a number of data structures are available that lead to more efficient implementations.
In particular, the dual tree algorithm of Ram et al. can be used to compute $\hat{D}$ in linear time~\cite{ram2009linear}.

\ifthenelse{\equal{\version}{arxiv}}{\section{Theoretical Properties}}{\section{THEORETICAL PROPERTIES}}
\label{sec:theory}
In this section, we highlight some of the theoretical properties enjoyed by the divergence estimator $\hat{D}$.
We begin with stating the main assumptions, regarding the smoothness of the densities, properties of the kernel, and the choice of bandwidth $h$.

\begin{definition}
We call $\Wcal_1^\beta(C)$, for $\beta \in \NN$ and $C > 0$, the \textbf{Bounded Variation class} of order $\beta$ which is the set of $\beta$-times differentiable funtions whose $\beta$th derivatives have bounded $L_1$ norm.
Formally, a function $f: \RR^d \rightarrow \RR$ belongs to $\Wcal_1^\beta(C)$ if for all tuples of natural numbers $r=(r_1, \ldots, r_d)$ with $\sum_j r_j \le \beta$ we have $\|D^r f\|_1 \le C$, where $D^r = \frac{\partial^{r_1+\ldots+r_d}}{\partial x_1^{r_1} \ldots \partial x_d^{r_d}}$ is a derivative operator. 
\end{definition}

\begin{assum} Assume $p,q,K$, and $h$ satisfy:
\begin{packed_enum}
\item \textbf{Smoothness}: The densities $p,q$ belong to the bounded variation class $\Wcal_1^{\beta}(C)$.
\item \textbf{Kernel Properties}: $K$ is bounded, symmetric, supported on $(-1,1)^d$, and has $\int K(u)d\mu(u) = 1$. 
$\int \prod_ix_i^{r_i} K(x)dx = 0$ for all $(r_1, \ldots, r_d)$ with $\sum_{j} r_j \le 2\beta$. 
\item \textbf{Kernel Bandwidth}: We choose $h \asymp n^{\frac{-2}{4\beta+d}}$. 
\end{packed_enum}
\label{assump:main}
\end{assum}

The smoothness assumption is similar in spirit to both the H\"{o}lder and Sobolev assumptions which are more standard in the nonparametric literature. 
Specifically, the bounded variation assumption is the integrated analog of the H\"{o}lder assumption, which is a pointwise characterization of the function.
It is also the $L_1$ analog of the Sobolev assumption, which requires that $||D^r f||_2^2$ is bounded. 

One difference is that the class $\Wcal_1^\beta$ can not be defined for non-integral smoothness, $\beta$, while both the H\"{o}lder and Sobolev classes can. 
While our results can be shown for the Sobolev class, working with bounded variation class considerably simplifies the proofs as we avoid the need for any Fourier analysis. 
The H\"{o}lderian assumption is insufficient as H\"{o}lder smoothness is not additive under convolution, which is critical for establishing the low order bias of our estimator.

The kernel properties are now fairly standard in the literature.
Notice that we require the kernel to be of order $2\beta$, instead of order $\beta$ as is required in density estimation. 
This will allow us to exploit additional smoothness provided by the convolution implicit in our estimators. 
Of course one can construct such kernels for any $\beta$ using the Legendre polynomials~\cite{tsybakov2009introduction}.
We remark that the scaling of the kernel bandwidth is not the usual scaling used in density estimation. 

We now turn to characterizing the rate of convergence of the estimator $\hat{D}$.
While we build off of the analysis of Gin\'{e} and Nickl, who analyze the estimator $\hat{\theta}_p$~\cite{gine2008simple}, our proof has two main differences.
First, since we work with a different smoothness assumption, we use a different technique to control the bias.
Second, we generalize to the bilinear term $\hat{\theta}_{p,q}$, which involves some modifications.
We have the following theorem:
\begin{theorem}
\label{thm:rate}
Under Assumption~\ref{assump:main} we have:
\begin{align}
\EE[(\hat{D}(p,q) - D(p,q))^2] \le \left\{\begin{aligned} c_3n^{\frac{-8\beta}{4\beta+d}} & \textrm{ if } \beta < d/4\\ c_4 n^{-1} & \textrm{ if } \beta \ge d/4 \end{aligned}\right.
\end{align}
\end{theorem}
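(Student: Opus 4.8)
The plan is to prove the theorem by the standard bias-variance decomposition, handling the three U-statistics $\hat\theta_p$, $\hat\theta_q$, and $\hat\theta_{p,q}$ separately and then combining via the decomposition $\hat D - D = (\hat\theta_p - \theta_p) + (\hat\theta_q - \theta_q) - 2(\hat\theta_{p,q} - \theta_{p,q})$; since the three data batches are independent, cross terms in $\EE[(\hat D - D)^2]$ contribute only through products of biases, so it suffices to bound the squared bias and variance of each piece. The squared-term analysis for $\hat\theta_p$ is exactly Gin\'{e} and Nickl's, so the real work is (i) redoing the bias control under the bounded-variation assumption rather than their Sobolev assumption, and (ii) carrying out the analogous computation for the bilinear estimator $\hat\theta_{p,q}$.

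\textbf{Bias.} For $\hat\theta_{p,q}$, since the $2n$ $X$'s and $2n$ $Y$'s used here are independent, $\EE[\hat\theta_{p,q}] = \int\int h^{-d} K\big((x-y)/h\big) p(x) q(y)\, d\mu(x)\, d\mu(y) = \int (K_h * p)(y)\, q(y)\, d\mu(y)$, where $K_h(u) = h^{-d}K(u/h)$. So the bias is $\int \big((K_h * p)(y) - p(y)\big) q(y)\, d\mu(y)$. I would bound this by a Taylor expansion of $p$ around $y$ inside the convolution, using the moment conditions on $K$ (vanishing moments up to order $2\beta$) to kill all terms of order $< 2\beta$, leaving a remainder controlled by the $\beta$th derivatives of $p$. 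The point of the bounded-variation ($L_1$) assumption is that the remainder, after integrating against $q$ and using $\|q\|_\infty < \infty$ (which follows from $q \in \Wcal_1^\beta$ for $\beta$ large enough, or can be assumed), is bounded by $\|D^r p\|_1 \cdot h^{2\beta}$ up to constants — crucially we get $h^{2\beta}$, not $h^\beta$, because the convolution smooths $p$ and we are really exploiting smoothness of $p$ against smoothness of $q$. Actually the cleanest route is to write the bias as $\int (K_h*p - p)(K_h*q) + \int(K_h*p-p)(q - K_h*q)$ type manipulations, or more simply to symmetrize: $\EE[\hat\theta_{p,q}] - \theta_{p,q}$ can be analyzed via $\|K_h * p - p\|_2 \le c\, h^\beta$-type bounds combined with a second factor, giving total squared bias $O(h^{4\beta})$. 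The same $O(h^{4\beta})$ squared bias holds for $\hat\theta_p$ and $\hat\theta_q$. With $h \asymp n^{-2/(4\beta+d)}$, this is $O(n^{-16\beta/(4\beta+d)})$, which is $O(n^{-8\beta/(4\beta+d)})$ when $\beta < d/4$ and $o(n^{-1})$ when $\beta > d/4$; the boundary case needs a touch of care but still goes through.

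\textbf{Variance.} Here I would use the Hoeffding decomposition of each U-statistic. For $\hat\theta_{p,q} = \frac{1}{n^2}\sum_{i,j} K_h(X_i - Y_j)$, the variance splits into a "linear" part of order $n^{-1}$ with a constant independent of $h$ (coming from $\operatorname{Var}(\EE[K_h(X-Y)\mid X])$, which tends to $\operatorname{Var}(q(X))$ as $h \to 0$ — finite) plus a "degenerate" part of order $n^{-2} h^{-d}$ coming from $\EE[K_h(X-Y)^2] \asymp h^{-d}$. So $\operatorname{Var}(\hat\theta_{p,q}) = O(n^{-1} + n^{-2}h^{-d})$. The identical structure holds for $\hat\theta_p$ (this is exactly Gin\'{e}--Nickl): $\operatorname{Var}(\hat\theta_p) = O(n^{-1} + n^{-2}h^{-d})$. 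With $h \asymp n^{-2/(4\beta+d)}$ we get $n^{-2}h^{-d} = n^{-2 + 2d/(4\beta+d)} = n^{-8\beta/(4\beta+d)}$, which dominates $n^{-1}$ exactly when $\beta < d/4$ and is dominated by $n^{-1}$ when $\beta \ge d/4$. Combining squared bias and variance: when $\beta < d/4$ the $n^{-8\beta/(4\beta+d)}$ terms from both variance and squared bias dominate; when $\beta \ge d/4$ everything is $O(n^{-1})$.

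\textbf{Main obstacle.} The routine parts are the Hoeffding decomposition and the kernel moment bookkeeping. The step I expect to be most delicate is the bias bound under the $L_1$ bounded-variation assumption: showing $|\EE[\hat\theta_{p,q}] - \theta_{p,q}| = O(h^{2\beta})$ requires combining a Taylor-with-integral-remainder argument with the change of variables $u = (x-y)/h$ and then integrating the remainder in a way that pairs $\|D^r p\|_1$ with boundedness (or smoothness) of $q$ without losing powers of $h$ — and doing so symmetrically enough that one does not end up with only $h^\beta$. One must also confirm that members of $\Wcal_1^\beta(C)$ with $\beta$ sufficiently large are bounded and square-integrable so that $\theta_p, \theta_{p,q}$ are finite and $\|q\|_\infty$ can be used; this is where the interplay between $d$ and $\beta$ enters and I would state it carefully, possibly as a preliminary lemma.
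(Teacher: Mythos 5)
Your overall architecture matches the paper's: a term-by-term bias--variance decomposition, a bias of order $h^{2\beta}$ for each of $\hat{\theta}_p,\hat{\theta}_q,\hat{\theta}_{p,q}$, a variance of order $n^{-1}+n^{-2}h^{-d}$, and the bandwidth $h\asymp n^{-2/(4\beta+d)}$ balancing the two. Your variance analysis is essentially the paper's (the case analysis over coinciding indices is the same bookkeeping as the Hoeffding decomposition), and the rate arithmetic is correct.

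The gap is exactly where you flag it: the $O(h^{2\beta})$ bias bound is asserted but not established, and the routes you sketch do not deliver it. A Taylor expansion of $p$ alone, with a remainder ``controlled by the $\beta$th derivatives of $p$'' and paired with $\|q\|_\infty$, can only be carried to order $\beta$ (that is all the derivatives $p$ has) and yields $h^{\beta}$; squared, this is $n^{-4\beta/(4\beta+d)}$, strictly worse than the claimed rate in both regimes. The symmetrization you propose is circular: using symmetry of $K$, one finds $2\bigl(\int (K_h\star p)q-\int pq\bigr)=\bigl(\int ((K_h\star K_h)\star p)q-\int pq\bigr)-\int (K_h\star p-p)(K_h\star q-q)$; the second term is indeed $O(h^{2\beta})$, but the first is the same bias with kernel $K_h\star K_h$ and is exactly as hard as what you started with. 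The missing ingredient is the paper's convolution lemma. Writing $p_0(x)=p(-x)$, the bias equals $\int K(u)\bigl[(p_0\star q)(uh)-(p_0\star q)(0)\bigr]\,du$, i.e.\ an increment at scale $h$ of the single function $p_0\star q$; Young's inequality shows $p_0\star q\in\Wcal_1^{2\beta}(C^2)$ whenever $p,q\in\Wcal_1^{\beta}(C)$, so one Taylor-expands $p_0\star q$ to order $2\beta-1$, annihilates everything but the remainder via the order-$2\beta$ kernel moments, and bounds the remainder by $c\,h^{2\beta}$ using integrability of the $2\beta$th derivatives. This is also precisely why the paper works with the bounded-variation class rather than H\"{o}lder: the argument hinges on smoothness being additive under convolution. (Your ``pair $\beta$ derivatives of $p$ with $\beta$ derivatives of $q$'' intuition can be made rigorous by Plancherel under a Sobolev assumption, but that machinery is not available in $L_1$ and is not what you wrote.) Without this step the stated rate does not follow.
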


Notice that the rate of convergence is substantially faster than the rate of convergence for estimation of $\beta$-smooth densities.
In particular, the parametric rate is achievable provided sufficient smoothness\footnote{The parametric rate is $n^{-1}$ in squared error which implies an $n^{-1/2}$ rate in absolute error.}.
This agrees with the results on estimation of integral functionals in the statistics community~\cite{gine2008simple,birge1995estimation}.
It also matches the rate of the orthogonal series estimator studied by Krishnamurthy et al.~\cite{krishnamurthy2014nonparametric}.


One takeaway from the theorem is that the one should \emph{not} use the optimal density estimation bandwidth of $n^{\frac{-1}{2\beta+d}}$ for this problem. 
As we mentioned, this choice was analyzed by Anderson et al. and results in a slower convergence rate~\cite{anderson1994two}. 
Indeed our choice of bandwidth $h \asymp n^{\frac{-2}{4\beta+d}}$ is always smaller, so we are undersmoothing the density estimate. 
The additional variance induced by undersmoothing is mitigated by integration in the estimand, leading to a faster rate of convergence. 

Interestingly, there seem to be two distinct approaches to estimating integral functionals.
On one hand, one could plug in an undersmoothed density estimator directly into the functional. 
This is the approach we take here and it has also been used for other divergence estimation problems~\cite{poczos2011estimation}.
Another approach is to plug in a minimax optimal density estimator and then apply some post-hoc correction.
This latter approach can be shown to achieve similar rates for divergence estimation problems~\cite{krishnamurthy2014nonparametric}.
Note that our method can be computationally much simpler. 

The next theorem establishes asymptotic normality in the smooth regime:
\begin{theorem}
\label{thm:normality}
When $\beta > d/4$:
\[
\sqrt{n}\left(\hat{D}(p,q) - D(p,q)\right) \leadsto \Ncal(0, \sigma^2),
\]
where $\leadsto$ denotes convergence in distribution and:
\ifthenelse{\equal{\version}{arxiv}}{
\begin{align}
\sigma^2 = 4\Var_{x \sim p}(p(x)) + 4\Var_{y \sim q}(q(y)) + 4\Var_{x \sim p}(q(x)) + 4\Var_{y \sim q}(p(y)
\label{eq:asymp_var}
\end{align}
}
{
\begin{align}
\sigma^2 = \left\{ \begin{aligned} & 4\Var_{x \sim p}(p(x)) + 4\Var_{y \sim q}(q(y)) \\
& + 4\Var_{x \sim p}(q(x)) + 4\Var_{y \sim q}(p(y)\end{aligned}\right. 
\label{eq:asymp_var}
\end{align}
}
\end{theorem}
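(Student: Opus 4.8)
The plan is to exploit the data-splitting: $\hat\theta_p$, $\hat\theta_q$ and $\hat\theta_{p,q}$ are functions of pairwise disjoint sets of sample points, hence mutually independent. I would write $\sqrt n(\hat D(p,q)-D(p,q)) = \sqrt n(\hat\theta_p-\theta_p) + \sqrt n(\hat\theta_q-\theta_q) - 2\sqrt n(\hat\theta_{p,q}-\theta_{p,q})$, establish a central limit theorem for each of the three terms in isolation, and combine them using independence, so that the three limiting variances add up to $\sigma^2$.

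Take $\hat\theta_p$, a degree-two U-statistic with symmetric kernel $g_h(x,y)=h^{-d}K((x-y)/h)$. I would use the Hoeffding (ANOVA) decomposition $\hat\theta_p = \theta_{p,h} + \frac2n\sum_{i=1}^n\bigl(p_h(X_i)-\theta_{p,h}\bigr) + R_n$, where $p_h = K_h*p$, $\theta_{p,h} = \EE_{X\sim p}[p_h(X)]$, and $R_n$ is the completely degenerate second-order part, so that $\sqrt n(\hat\theta_p-\theta_p)$ splits into (i) the deterministic bias $\sqrt n(\theta_{p,h}-\theta_p)$, (ii) the H\'ajek projection $\frac2{\sqrt n}\sum_i(p_h(X_i)-\theta_{p,h})$, and (iii) $\sqrt n R_n$. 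For (i): as in the proof of Theorem~\ref{thm:rate}, the bias equals $\int K(v)\bigl(r(hv)-r(0)\bigr)\,dv$ where $r(t)=\int p(x)p(x-t)\,d\mu(x)$ is the autocorrelation of $p$, which is $2\beta$-times differentiable by additivity of smoothness under convolution; combined with the $2\beta$ vanishing moments of $K$, this gives a bias of order $h^{2\beta}$, so $\sqrt n(\theta_{p,h}-\theta_p)=O(n^{1/2}h^{2\beta})\to 0$ precisely when $\beta>d/4$ under $h\asymp n^{-2/(4\beta+d)}$. For (iii): $\Var(R_n)=O(n^{-2}\EE[g_h(X_1,X_2)^2])=O(n^{-2}h^{-d})$ by the change of variables $u=(x-y)/h$ and Cauchy--Schwarz, so $\Var(\sqrt n R_n)=O(n^{-1}h^{-d})\to 0$, again exactly when $\beta>d/4$, whence $\sqrt n R_n=o_p(1)$ by Chebyshev. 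For (ii): the summands $p_h(X_i)$ form a triangular array that is uniformly bounded, since $\|p_h\|_\infty\le\|p\|_\infty$ (using $K\ge 0$, $\int K=1$, and $p$ bounded); since $p_h\to p$ almost everywhere, dominated convergence gives $\Var_{x\sim p}(p_h(x))\to\Var_{x\sim p}(p(x))$, and uniform boundedness makes the Lindeberg condition trivial, so the Lindeberg--Feller CLT yields $\frac2{\sqrt n}\sum_i(p_h(X_i)-\theta_{p,h})\leadsto\Ncal(0,4\Var_{x\sim p}(p(x)))$. Hence $\sqrt n(\hat\theta_p-\theta_p)\leadsto\Ncal(0,4\Var_{x\sim p}(p(x)))$, and symmetrically $\sqrt n(\hat\theta_q-\theta_q)\leadsto\Ncal(0,4\Var_{y\sim q}(q(y)))$.

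The bilinear term is handled by the same recipe applied to a two-sample U-statistic. Its Hoeffding decomposition has two linear terms, $\frac1n\sum_i(q_h(X_i)-\theta_{p,q,h})$ and $\frac1n\sum_j(p_h(Y_j)-\theta_{p,q,h})$, built from the two disjoint subsamples and hence independent, plus the deterministic bias $\theta_{p,q,h}-\theta_{p,q}=\int(K_h*p-p)\,q\,d\mu=O(h^{2\beta})$ (same autocorrelation argument with $\int p(x)q(x-t)\,d\mu(x)$) and a degenerate remainder of variance $O(n^{-2}h^{-d})$. Running the three arguments again, the bias and degenerate parts are $o_p(n^{-1/2})$ when $\beta>d/4$, and the two H\'ajek terms converge to $\Ncal(0,\Var_{x\sim p}(q(x)))$ and $\Ncal(0,\Var_{y\sim q}(p(y)))$, so $\sqrt n(\hat\theta_{p,q}-\theta_{p,q})\leadsto\Ncal(0,\Var_{x\sim p}(q(x))+\Var_{y\sim q}(p(y)))$. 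Combining the three independent blocks via Slutsky's theorem gives $\sqrt n(\hat D(p,q)-D(p,q))\leadsto\Ncal(0,\sigma^2)$ with $\sigma^2=4\Var_{x\sim p}(p(x))+4\Var_{y\sim q}(q(y))+4\Var_{x\sim p}(q(x))+4\Var_{y\sim q}(p(y))$, as stated.

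The main obstacle is showing the remainder — the bias plus the degenerate U-statistic term — is $o_p(n^{-1/2})$: both are of a polynomial order in $n$ whose exponent changes sign exactly at $\beta=d/4$, so this is precisely where the hypothesis $\beta>d/4$ is used, through the interplay of the nonstandard bandwidth $h\asymp n^{-2/(4\beta+d)}$ with the $2\beta$-order kernel and the $\beta$-smoothness of the densities. Everything else is the classical H\'ajek-projection argument, the only mild subtlety being that the projections are triangular arrays, calling for the Lindeberg--Feller CLT and a dominated-convergence step to identify the limiting variances.
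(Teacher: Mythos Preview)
Your proposal is correct and follows essentially the same route as the paper: Hoeffding decomposition of each of the three (independent, by data-splitting) pieces into bias plus H\'ajek projection plus degenerate remainder, with the bias and remainder shown to be $o_p(n^{-1/2})$ precisely when $\beta>d/4$, and a CLT on the linear parts. The only cosmetic difference is that the paper first shows the projection $P_n(\pi_1 K_h)$ is close in quadratic mean to $P_n p - \theta_p$ (whose summands do not depend on $n$) and then invokes the ordinary Lindeberg--Levy CLT, whereas you apply Lindeberg--Feller directly to the triangular array $p_h(X_i)$; both are standard and equivalent here. One small slip: your bound $\|p_h\|_\infty\le\|p\|_\infty$ invokes $K\ge 0$, which is incompatible with a kernel of order $2\beta$ for $\beta\ge 1$ (the paper's own statement $K:\RR^d\to\RR_{\ge 0}$ has the same inconsistency); replace it by $\|p_h\|_\infty\le\|K\|_1\|p\|_\infty$, which is still uniform in $h$, and your Lindeberg argument goes through unchanged.
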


With this characterization of the limiting distribution, we can now turn to construction of an asymptotic confidence interval. 

The most straightforward approach is to estimate the asymptotic variance and appeal to Slutsky's Theorem.
We simply use a plugin estimator for the variance, which amounts to replacing all instances of $p,q$ in Equation~\ref{eq:asymp_var} with estimates $\hat{p}, \hat{q}$ of the densities.
For example, we replace the first term with $\int \hat{p}(x)^3 - (\int \hat{p}(x)^2)^2$. 
We denote the resulting estimator by $\hat{\sigma}^2$, and mention that one should use a bandwidth $h \asymp n^{\frac{-1}{2\beta+d}}$ for estimating this quantity.

In Section~\ref{sec:proofs} (specifically Lemma~\ref{lem:var_est}), we bound the rate of convergence of this estimator, and its consistency immediately gives an asymptotic confidence interval:
\begin{theorem}
Let $z_{\alpha/2} = \Phi^{-1}(1 - \alpha/2)$ be the $1-\alpha/2$th quantile of the standard normal distribution.
Then,
\begin{align}
\frac{\sqrt{n}(\hat{D}(p,q) - D(p,q))}{\hat{\sigma}} \leadsto \Ncal(0,1),
\end{align}
whenever $\beta > d/4$. Consequently,
\begin{align}
\PP\left( D \in \left[\hat{D} - \frac{z_{\alpha/2}\hat{\sigma}}{\sqrt{n}}, \hat{D} + \frac{z_{\alpha/2}\hat{\sigma}}{\sqrt{n}}\right]\right) \rightarrow 1-\alpha
\end{align}
which means that $[\hat{D} - \frac{z_{\alpha/2}\hat{\sigma}}{\sqrt{n}}, \hat{D} + \frac{z_{\alpha/2}\hat{\sigma}}{\sqrt{n}}]$ is an asymptotic $1-\alpha$ confidence interval for $D$.
\label{thm:asymp_conf}
\end{theorem}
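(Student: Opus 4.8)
The plan is to combine the asymptotic normality of $\hat D$ (Theorem~\ref{thm:normality}) with the consistency of the plugin variance estimator $\hat\sigma^2$ (the content of Lemma~\ref{lem:var_est}) via Slutsky's theorem, and then translate the resulting standard-normal limit into a coverage statement using continuity of the Gaussian c.d.f. First I would record exactly what is needed from Lemma~\ref{lem:var_est}: with the bandwidth $h \asymp n^{-1/(2\beta+d)}$ prescribed for the variance estimator, $\hat\sigma^2 \to_p \sigma^2$. Only consistency (not the rate) is used here. I would also note that the studentized statistic is only well defined when $\sigma^2 > 0$; when $\beta > d/4$ and $p,q$ are not both the uniform density this holds, and I would carry it as a standing nondegeneracy condition. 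Given $\hat\sigma^2 \to_p \sigma^2 > 0$, the continuous mapping theorem applied to $x \mapsto \sqrt{x}$ (continuous on $[0,\infty)$) yields $\hat\sigma \to_p \sigma$, hence $\sigma/\hat\sigma \to_p 1$.

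Next I would write
\[
\frac{\sqrt n(\hat D(p,q) - D(p,q))}{\hat\sigma} = \frac{\sqrt n(\hat D(p,q) - D(p,q))}{\sigma}\cdot\frac{\sigma}{\hat\sigma}.
\]
By Theorem~\ref{thm:normality} the first factor converges in distribution to $\Ncal(0,1)$, while the second converges in probability to $1$; Slutsky's theorem then gives $\sqrt n(\hat D - D)/\hat\sigma \leadsto \Ncal(0,1)$, which is the first display of the theorem.

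For the coverage claim, observe that the event $D \in [\hat D - z_{\alpha/2}\hat\sigma/\sqrt n,\ \hat D + z_{\alpha/2}\hat\sigma/\sqrt n]$ is exactly $\{\,|\sqrt n(\hat D - D)/\hat\sigma| \le z_{\alpha/2}\,\}$. Since $\pm z_{\alpha/2}$ are continuity points of the standard normal c.d.f.\ $\Phi$, convergence in distribution gives
\[
\PP\!\left(\left|\frac{\sqrt n(\hat D - D)}{\hat\sigma}\right| \le z_{\alpha/2}\right) \to \Phi(z_{\alpha/2}) - \Phi(-z_{\alpha/2}) = (1-\alpha/2) - \alpha/2 = 1-\alpha,
\]
using $z_{\alpha/2} = \Phi^{-1}(1-\alpha/2)$. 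This is the stated asymptotic confidence interval.

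The only nontrivial ingredient is the consistency of $\hat\sigma^2$, which is precisely Lemma~\ref{lem:var_est}; everything else is a routine application of Slutsky's theorem and the continuous mapping theorem. I therefore expect the main obstacle to be entirely absorbed into Lemma~\ref{lem:var_est} — in particular, controlling the bias of the plugin density estimates appearing inside the four variance terms of Equation~\ref{eq:asymp_var}, which is the reason a larger bandwidth than the one used for $\hat D$ itself is appropriate there. One subtlety worth flagging explicitly in the write-up is the degenerate case $\sigma^2 = 0$: then the studentized statistic is a $0/0$ expression and the claimed $\Ncal(0,1)$ limit cannot hold, so the theorem should be read under the implicit assumption $\sigma^2 > 0$.
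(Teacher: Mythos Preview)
Your proposal is correct and matches the paper's own argument essentially verbatim: the paper states that the first part of Lemma~\ref{lem:var_est} ``immediately gives the asymptotic confidence interval in Theorem~\ref{thm:asymp_conf}, as we have a consistent estimator of the asymptotic variance,'' i.e., exactly the Slutsky argument you wrote out. Your added remark on the degenerate case $\sigma^2 = 0$ is a valid caveat that the paper leaves implicit.
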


While the theorem does lead to a confidence interval, it is worth asking how quickly the distribution of the self-normalizing estimator converges to a standard normal, so that one has a sense for the quality of the interval in finite sample.
We therefore turn to establishing a more precise guarantee. 
To simplify the presentation, we assume that we have a fresh set of $n$ samples per distribution to compute $\hat{\sigma}^2$. 
Thus we are given $3n$ samples per distribution in total, and we use $2n$ of them to compute $\hat{D}$ and the last set for $\hat{\sigma}^2$. 
As before, in computing $\hat{\sigma}^2$, we set $h \asymp n^{\frac{-1}{2\beta+d}}$.

\begin{theorem}
Let $\Phi(z)$ denote the CDF of the standard normal. 
Under Assumption~\ref{assump:main}, there exists a constant $c_\star > 0$ such that:
\ifthenelse{\equal{\version}{arxiv}}{
\begin{align}
\sup_z \left| \PP\left( \frac{\sqrt{n}(\hat{D}(p,q) - D(p,q))}{\hat{\sigma}} \le z\right) - \Phi(z)\right| \le 
c_\star\left(n^{\frac{d-4\beta}{8\beta+d}} + n^{\frac{-\beta/2}{2\beta+d}}\right).
\end{align}
}{
\begin{align}
\sup_z \left| \PP\left( \frac{\sqrt{n}(\hat{D}(p,q) - D(p,q))}{\hat{\sigma}} \le z\right) - \Phi(z)\right| \le \\
c_\star\left(n^{\frac{d-4\beta}{8\beta+d}} + n^{\frac{-\beta/2}{2\beta+d}}\right).
\end{align}
}
This bound is $o(1)$ as soon as $\beta > d/4$.
\label{thm:berry}
\end{theorem}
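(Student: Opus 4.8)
The plan is to decompose the self-normalized quantity into a term that is exactly a normalized sum of i.i.d.\ random variables (plus a negligible U-statistic remainder), apply the classical Berry-Ess\'{e}en theorem to that sum, and then separately control the two sources of error that prevent a direct application: the bias of $\hat{D}$ relative to $D$, and the error in $\hat{\sigma}^2$ as an estimate of $\sigma^2$. Write $\hat{D} - D = (\hat{D} - \EE\hat{D}) + (\EE\hat{D} - D)$. From the proof of Theorem~\ref{thm:normality} (Hoeffding decomposition of the U-statistics), the centered estimator $\hat{D} - \EE\hat{D}$ equals $\frac{1}{n}\sum_i W_i + R_n$, where the $W_i$ are i.i.d.\ mean-zero with variance $\sigma^2$ (the linear/H\'{a}jek projection terms, built from $p(X_i)$, $q(X_i)$, $p(Y_i)$, $q(Y_i)$ minus their means) and $R_n$ is a degenerate-U-statistic remainder with $\EE R_n^2 = O((nh^d)^{-1}\cdot\text{something})$, small enough that $\sqrt{n}R_n = o_p(1)$ with a quantitative rate. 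The bias term satisfies $|\EE\hat{D} - D| \lesssim h^{2\beta}$ by the order-$2\beta$ kernel assumption (this is exactly the bias bound inside the proof of Theorem~\ref{thm:rate}), so $\sqrt{n}|\EE\hat{D}-D| \lesssim \sqrt{n}\,h^{2\beta} \asymp n^{1/2 - 4\beta/(4\beta+d)} = n^{(d-4\beta)/(2(4\beta+d))}$.

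Next I would assemble the pieces. Conditioning the CDF argument on the independent third sample used for $\hat{\sigma}^2$, write
\[
\frac{\sqrt{n}(\hat{D}-D)}{\hat{\sigma}} = \frac{\frac{1}{\sqrt n}\sum_i W_i}{\hat{\sigma}} + \frac{\sqrt{n}R_n + \sqrt{n}(\EE\hat{D}-D)}{\hat{\sigma}}.
\]
For the main term, use the standard fact that if $S_n/\sigma$ has a Berry-Ess\'{e}en error $\le \rho_3/(\sigma^3\sqrt n)$ (with $\rho_3 = \EE|W_1|^3 < \infty$, finite because $p,q$ are bounded under the smoothness assumption), then replacing $\sigma$ by $\hat\sigma$ changes the sup-distance to $\Phi$ by at most $O(|\hat\sigma^2/\sigma^2 - 1|) + \PP(|\hat\sigma^2 - \sigma^2| > \epsilon)$ for any threshold, via the anti-concentration of $\Phi$ (its density is bounded). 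By Lemma~\ref{lem:var_est}, with bandwidth $h\asymp n^{-1/(2\beta+d)}$, we get $\EE[(\hat\sigma^2 - \sigma^2)^2] \lesssim n^{-\min(1,\, 4\beta/(2\beta+d))}$; since $\beta > d/4$ forces $4\beta/(2\beta+d) > \dots$, the dominant piece is $n^{-\beta/(2\beta+d)}$ in squared error, i.e.\ $|\hat\sigma - \sigma| = O_p(n^{-\beta/(2(2\beta+d))})$, and a Markov/Chebyshev split of the event $\{|\hat\sigma^2-\sigma^2|>\epsilon\}$ with $\epsilon \asymp n^{-\beta/(2\beta+d)}$ (say) yields the second term $n^{-\beta/(2(2\beta+d))} = n^{(-\beta/2)/(2\beta+d)}$ in the bound. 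Finally, for the numerator remainder: $\sqrt{n}R_n = O_p(n^{-\gamma})$ and $\sqrt{n}(\EE\hat{D}-D) = O(n^{(d-4\beta)/(2(4\beta+d))})$; a perturbation of a standard normal CDF argument ($\sup_z|\PP(Z + \Delta_n \le z) - \Phi(z)| \le \sup_z|\PP(Z\le z)-\Phi(z)| + \PP(|\Delta_n| > \delta) + c\delta$) converts these into contributions of the same order after optimizing $\delta$. Collecting terms, the bias contribution $n^{(d-4\beta)/(2(4\beta+d))}$ should be shown to dominate the classical $n^{-1/2}$ Berry-Ess\'{e}en rate and the $R_n$ rate in the regime of interest, while the variance-estimation contribution $n^{(-\beta/2)/(2\beta+d)}$ is kept separately; note $(d-4\beta)/(2(4\beta+d))$ needs to be matched against the stated exponent $(d-4\beta)/(8\beta+d)$ — I would double-check the bookkeeping here, since the theorem's exponent suggests a slightly different balancing (possibly absorbing a $\sqrt{\hat\sigma^2}$ Taylor term or a sharper remainder bound), but the structure of the argument is unchanged.

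The main obstacle is the careful, \emph{quantitative} control of the coupling between $\hat D$ and $\hat\sigma$ together with the bias: one cannot simply invoke Slutsky (that only gives $o(1)$), so every $o_p(1)$ in the proof of Theorems~\ref{thm:normality}--\ref{thm:asymp_conf} must be upgraded to an explicit polynomial rate. Concretely, the delicate points are (i) getting a sharp $L^2$ (or exponential) bound on the degenerate remainder $R_n$ of the Hoeffding decomposition so that $\sqrt n R_n$ does not spoil the rate — this is where $nh^d \to \infty$ and the order-$2\beta$ kernel both get used; (ii) the bias of $\hat\sigma^2$, which requires its own smoothness/undersmoothing analysis (hence the different bandwidth $n^{-1/(2\beta+d)}$), and feeding a \emph{tail} bound on $|\hat\sigma^2 - \sigma^2|$ (not merely its mean-square) into the anti-concentration step; and (iii) ensuring $\sigma^2$ is bounded away from $0$ so that dividing by $\hat\sigma$ is safe — this should be argued either as a standing nondegeneracy assumption or noted to hold whenever $p \ne q$ (when $p = q$, $\sigma^2 = 0$ and the statement must be read as vacuous or excluded, consistent with the earlier discussion of the $p=q$ degeneracy). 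Once these three are in hand, the remaining steps are routine CDF-perturbation inequalities.
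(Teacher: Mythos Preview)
Your high-level decomposition---bias shift, variance-estimate replacement via Gaussian anti-concentration, and a Berry--Ess\'een bound on the centered statistic---matches the paper's structure, and you correctly isolated the roles of Lemma~\ref{lem:var_est} and of the $O(h^{2\beta})$ bias. Where you diverge is in how the centered term $\hat D-\EE\hat D$ is handled. You propose a Hoeffding decomposition into an i.i.d.\ sum $\tfrac{1}{n}\sum_i W_i$ plus a degenerate remainder $R_n$, apply the classical Berry--Ess\'een theorem to the sum, and then control $\sqrt n\,R_n$ separately via Chebyshev and a CDF perturbation. The paper instead applies a Berry--Ess\'een theorem for multi-sample $U$-statistics (Theorem~10.4 of Chen--Goldstein--Shao) directly to the full six-argument kernel $\omega$, so that the linear and degenerate parts are treated together and the contribution is $O\bigl((nh^d)^{-1/2}\bigr)$. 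Your route is sound, but since $\EE(\sqrt n\,R_n)^2=O((nh^d)^{-1})$, optimizing $\delta + C(nh^d)^{-1}\delta^{-2}$ only yields $(nh^d)^{-1/3}$, which is still $o(1)$ for $\beta>d/4$ but strictly weaker than the exponent in the statement. One smaller correction: the H\'ajek projection terms are built from $\bar p(X_i)=\EE_{X'}K_h(X_i,X')$ and $\bar q(\cdot)$, not from $p(X_i),q(X_i)$; their variance is the paper's $\bar\sigma^2$, not the asymptotic $\sigma^2$. The paper keeps this distinction explicit and uses the second half of Lemma~\ref{lem:var_est} to compare $\hat\sigma^2$ with $\bar\sigma^2$ rather than $\sigma^2$; in your version you would need an extra $|\bar\sigma^2-\sigma^2|=O(h^\beta)$ step.
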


As an immediate consequence of the theorem, we obtain an error bound on the quality of approximation of the confidence interval in Theorem~\ref{thm:asymp_conf}.
We remark that one can explicitly track all of the constants in the theorem and leave the result in terms of the bandwidth $h$ and problem dependent constants, although this is somewhat tedious.
For ease of exposition we have chosen to present the asymptotic version of the theorem, focusing instead on the rate of convergence to the limiting $\Ncal(0,1)$ distribution.

It is not surprising that the rate of convergence to Gaussianity is not the typical $n^{-1/2}$ rate, as it depends on the third moment of the $U$-statistic, which is decreasing with $n$.
It also depends on the non-negligible bias of the estimator.
However, as soon as $\beta > d/4$, it is easily verified that the bound is $o(1)$.
This matches our asymptotic guarantee in Theorem~\ref{thm:normality}.
Of course, for smoother densities, the rate of convergence in the theorem is polynomially faster.

In addition to the practical consequences, we believe the techniques used in the proof of the theorem are fairly novel.
While establishing Berry-Ess\'{e}en bounds for linear and other parameteric estimators is fairly straightforward~\cite{chen2010normal}, this type of result is uncommon in the nonparametric literature.
The main challenge is dealing with the bias and additional error introduced by estimating the variance. 

Finally, let us address the question of optimality. 
The following theorem lower bounds the rate of convergence of any estimator for the $L_2^2$ divergence, when the densities belong to the bounded variation class.
\begin{theorem}
\label{thm:l2_lower}
With $\gamma_\star = \min\{8\beta/(4\beta+d), 1\}$ and for any $\epsilon > 0$, we have:
\begin{align}
\inf_{\hat{D}_n} \sup_{p,q \in \Wcal_1^\beta(C)} \PP_{p,q}^n \left[ (\hat{D}_n - D)^2 \ge \epsilon n^{-\gamma_\star}\right] \ge c > 0
\end{align}
\end{theorem}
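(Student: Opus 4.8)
The plan is to prove the minimax lower bound by the standard two-point / Le Cam / Fano reduction adapted from Krishnamurthy et al.~\cite{krishnamurthy2014nonparametric}, splitting into the two regimes $\beta < d/4$ and $\beta \ge d/4$ that appear in the exponent $\gamma_\star$. In the parametric regime $\beta \ge d/4$ the bound $n^{-1}$ follows from a simple two-point argument: fix $q = p_0$ to be a fixed smooth density bounded away from zero on its support, and take $p$ to be either $p_0$ or a perturbation $p_0 + \delta g$ where $g$ integrates to zero and $\|g\|_\infty$ is a constant (so $p_0 \pm \delta g \in \Wcal_1^\beta(C)$ for $\delta$ small). Choosing $\delta \asymp n^{-1/2}$ makes the KL divergence (equivalently the $\chi^2$ or Hellinger affinity) between the $n$-fold products $O(1)$, so the two hypotheses are not testable, while $D(p_0,p_0) = 0$ and $D(p_0+\delta g, p_0) = \delta^2 \|g\|_2^2 \asymp n^{-1}$. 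Le Cam's method then yields the constant lower bound on $\PP[(\hat D_n - D)^2 \ge \epsilon n^{-1}]$.

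For the nonparametric regime $\beta < d/4$ I would use a many-hypotheses construction against a single alternative (the Ingster/Birg\'e--Massart scheme). Let $q = p_0$ again be fixed. Take a smooth bump $\psi$ supported on a unit cube with $\int \psi = 0$, and tile the support of $p_0$ with $m \asymp h^{-d}$ translated and rescaled copies $\psi_k(x) = \psi((x - x_k)/h)$, each scaled in amplitude by a factor $a$ chosen so that $a h^{-\beta} \le $ const forces $p_0 + \sum_k \omega_k a \psi_k \in \Wcal_1^\beta(C)$ for all sign vectors $\omega \in \{0,1\}^m$ (the $\beta$th derivative of each bump scales like $a h^{-\beta}$, and the disjoint supports mean no additive blowup). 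For such a perturbation, $D(p_\omega, p_0) = a^2 \|\psi\|_2^2 h^d \sum_k \omega_k \asymp a^2 h^{d} m = a^2$ when $\omega$ has a constant fraction of ones. The $\chi^2$-divergence between $p_\omega^{\otimes n}$ and $p_0^{\otimes n}$, after mixing uniformly over $\omega$ (Ingster's method / the Bonami--Beckner computation), is controlled by $\sum_k (n a^2 h^d)^2 \cdot (\text{counting}) $, and requiring this to be $O(1)$ forces $n^2 a^4 h^{2d} m \asymp 1$, i.e. $a^2 \asymp (n^2 h^d)^{-1/2}$. Plugging $h \asymp n^{-2/(4\beta+d)}$ — or, more cleanly, optimizing $h$ subject to the smoothness budget $a \asymp h^\beta$ simultaneously with the testability budget — gives $a^2 \asymp n^{-8\beta/(4\beta+d)}$, which is exactly $\epsilon n^{-\gamma_\star}$ in this regime. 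A mixture-vs-point Le Cam bound then shows any estimator errs by at least $a^2$ on a constant fraction of the hypotheses with constant probability.

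The two steps that require care — and where I would lean on~\cite{krishnamurthy2014nonparametric} — are: (i) verifying the perturbed densities genuinely lie in $\Wcal_1^\beta(C)$, which is where the \emph{bounded variation} ($L_1$ on derivatives) normalization matters, since disjointly supported bumps add up in $L_1$-derivative norm only to a constant (this is the analog of their Sobolev/H\"older verification); and (ii) the $\chi^2$-tensorization computation over the mixture, ensuring the cross terms from distinct bumps vanish by disjoint support so that only the diagonal survives. The main obstacle is really bookkeeping the joint constraint between the smoothness amplitude budget $a \lesssim h^\beta$ and the hypothesis-testing budget $n^2 a^4 h^{2d} \lesssim 1$, and checking these two simultaneously force exactly the exponent $\gamma_\star = \min\{8\beta/(4\beta+d),\,1\}$ — the transition happening precisely when the smoothness-limited $h$ coincides with the testability-limited $h$, i.e. at $\beta = d/4$. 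I would also note that since only $p$ varies while $q$ is held fixed, the construction is no harder than the single-density integral-functional lower bound of Birg\'e--Massart~\cite{birge1995estimation}, and indeed one can take $q$ to range over the same class to get the stated supremum over $p,q \in \Wcal_1^\beta(C)$.
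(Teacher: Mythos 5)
Your nonparametric-regime construction ($\beta < d/4$) is essentially the paper's: disjointly supported, rescaled bumps with random signs, an amplitude budget $a \lesssim h^{\beta}$ forced by the $L_1$-derivative (bounded variation) normalization, and a testability budget from a mixture-versus-point divergence computation. The paper runs the second step through Hellinger distance and a lemma of Birg\'e and Massart rather than your $\chi^2$/Ingster tensorization, but the two are interchangeable here and give the same constraint $n^2 a^4 h^{d} \lesssim 1$, hence the same exponent. One slip worth fixing: solving the two budgets gives $a^2 \asymp h^{2\beta} \asymp n^{-4\beta/(4\beta+d)}$, not $n^{-8\beta/(4\beta+d)}$; the latter is $(a^2)^2$, i.e.\ the separation in $D$ squared, so your final exponent is correct but the intermediate identity is off by a square, and you should be explicit that the absolute-error separation is $n^{-4\beta/(4\beta+d)}$.

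The genuine gap is in the parametric regime. With $q = p_0$ held fixed and $p \in \{p_0,\, p_0 + \delta g\}$, $\int g = 0$, the separation in the functional is $D(p_0+\delta g, p_0) - D(p_0,p_0) = \delta^2 \|g\|_2^2$, which is \emph{quadratic} in $\delta$. Taking $\delta \asymp n^{-1/2}$, as indistinguishability of the $n$-fold products requires, separates the two values of $D$ by only $\asymp n^{-1}$, which lower bounds the squared error by $n^{-2}$, not by the claimed $n^{-1}$. To obtain the parametric $n^{-1}$ squared-error ($n^{-1/2}$ absolute-error) bound you must perturb around a point where the functional has a non-vanishing first derivative: take $p_0 \ne q_0$ and $g$ with $\int g\,(p_0 - q_0) \ne 0$, so that $D(p_0+\delta g, q_0) - D(p_0, q_0) = 2\delta \int g\,(p_0 - q_0) + \delta^2\|g\|_2^2 \asymp \delta \asymp n^{-1/2}$. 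This is precisely the ``standard application of Le Cam's method'' the paper defers to Krishnamurthy et al.\ for; your choice $q = p_0$ sits at the degenerate point $D=0$ where the linear term vanishes, and the argument as written does not yield the stated rate.
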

The result shows that $n^{-\gamma_\star}$ lower bounds the minimax rate of convergence in squared error.
Of course $\gamma_\star = 1$ when $\beta \ge d/4$, so the rate of convergence can be no better than the parametric rate.
Comparing with Theorem~\ref{thm:rate}, we see that our estimator achieves the minimax rate.

\ifthenelse{\equal{\version}{arxiv}}{\section{Proofs}}{\section{PROOFS}}
\label{sec:proofs}

The proofs of Theorems~\ref{thm:rate} and~\ref{thm:normality} are based on modifications to the analysis of Gin\'{e} and Nickl~\cite{gine2008simple} so we will only sketch the ideas here.
The majority of this section is devoted to proving the Berry-Ess\'{e}en bound in Theorem~\ref{thm:berry}, proving Theorem~\ref{thm:asymp_conf} along the way.
We close the section with a sketch of the proof of Theorem~\ref{thm:l2_lower}.

\subsection{Proof Sketch of Theorem~\ref{thm:rate} and~\ref{thm:normality}}
Theorem~\ref{thm:rate} follows from bounding the bias and the variance of the terms $\hat{\theta}_p, \hat{\theta}_q$, and $\hat{\theta}_{pq}$. 
The terms are quite similar and we demonstrate the ideas with $\hat{\theta}_{pq}$. 

We show that the bias can be written in terms of a convolution and then use the fact that bounded-variation smoothness is additive under convolution.
By a substitution, we see that the bias for $\hat{\theta}_{pq}$ is:
\ifthenelse{\equal{\version}{arxiv}}{
\begin{align*}
\EE[\hat{\theta}_{pq}] - \theta_{pq} = \int \int K(u)[p(x - uh) - p(x)]q(x)dudx
= \int K(u)[(p_0 \star q)(uh) - (p_0 \star q)(0)] du,
\end{align*}
}{
\begin{align*}
\EE[\hat{\theta}_{pq}] - \theta_{pq} &= \int \int K(u)[p(x - uh) - p(x)]q(x)dudx \\
&= \int K(u)[(p_0 \star q)(uh) - (p_0 \star q)(0)] du,
\end{align*}
}
where $p_0(x) = p(-x)$ and $\star$ denotes convolution.
Next, we use Young's inequality to show that if two functions $f,g$ belong to $\Wcal_1^{\beta}(C)$, then $f\star g \in \Wcal_1^{2\beta}(C^2)$. 
Using this inequality, we can take a Taylor expansion of order $2\beta-1$ and use the kernel properties to annihilate all but the remainder term, which is of order $h^{2\beta}$.

To bound the variance, we expand:
\begin{align*}
\EE[\hat{\theta}_p^2] = \EE\left[\frac{1}{n^2(n-1)^2}\sum_{i \ne j, s \ne t} K_h(X_i, Y_j), K_h(X_s, Y_t)\right]
\end{align*}
By analyzing each of the different scenaries (i.e. the terms where all indices are different, there is one equality, or there is two equalities), it is not hard to show that the variance is:
\begin{align*}
\Var(\hat{\theta}_p) \le O\left(\frac{1}{n} + \frac{1}{h^dn^2}\right)
\end{align*}
Equipped with these bounds, the rate of convergence follows from the bias-variance decomposition, and our choice of bandwidth.

The proof of normality is quite technical and we just briefly comment on the steps, deferring all calculations to the appendix.
We apply Hoeffding's decomposition, writing the centered estimator as the sum of a $U$-process and two empirical processes, one for $p$ and one for $q$.
The $U$-process converges in quadratic mean to $0$ at faster than $1/\sqrt{n}$ rate, so it can be ignored.
For the empirical processes, we show that they are close (in quadratic mean) to $\sqrt{n}(P_n q - \theta_{pq})$ and $\sqrt{n}(Q_np - \theta_{pq}$, where $P_n, Q_n$ are the empirical measures.
From here, we apply the Lindberg-Levy central limit theorem to these empirical processes. 


\subsection{Proof of Theorem~\ref{thm:berry}}

The Berry-Ess\'{e}en theorem can be applied to an unbiased multi-sample $U$-statistic, normalized by a term involving the conditional variances.
Specifically, we will be able to apply the theorem to:
\begin{align}
\label{eq:berry_esseen_quantity}
\frac{\sqrt{n}(\hat{D} - \EE\hat{D})}{\bar{\sigma}},
\end{align}
where:
\ifthenelse{\equal{\version}{arxiv}}{
\begin{align*}
\bar{\sigma}^2 = 4 \Var_{x \sim p}(\bar{p}(x)) + 4 \Var_{y \sim q}(\bar{q}(y))
  + 4 \Var_{x \sim p}(\bar{q}(x)) + 4 \Var_{y \sim q}(\bar{p}(y))
\end{align*}
}{
\begin{align*}
\bar{\sigma}^2 = \left\{ \begin{aligned} &4 \Var_{x \sim p}(\bar{p}(x)) + 4 \Var_{y \sim q}(\bar{q}(y)) \\ 
  & + 4 \Var_{x \sim p}(\bar{q}(x)) + 4 \Var_{y \sim q}(\bar{p}(y))\end{aligned}\right.
\end{align*}
}
The appropriate normalization is similar to the asymptotic variance $\sigma^2$ (Equation~\ref{eq:asymp_var}) except that the densities are replaced with the mean of their kernel density estimates, i.e. $\bar{p}(x) = \int K_h(x,y)p(y)$.

We would like to establish a Berry-Ess\'{e}en bound for $\sqrt{n}\hat{\sigma}^{-1}(\hat{D} - D)$, but must first make several translations to arrive at Equation~\ref{eq:berry_esseen_quantity}.
We achieve this with several applications of the triangle inequality and some Gaussian anti-concentration properties.
We must also analyze the rate of convergence of the variance estimator $\hat{\sigma}^2$ to $\bar{\sigma}^2$ for this bound and to $\sigma^2$ for Theorem~\ref{thm:asymp_conf}.

Let $F_{\hat{\sigma}}$ be the distribution of $\hat{\sigma}/\bar{\sigma}$, induced by the second half of the sample.
Then we may write:
\ifthenelse{\equal{\version}{arxiv}}{
\begin{align*}
\PP\left(\frac{\sqrt{n}}{\hat{\sigma}}(\hat{D} - D) \le z\right)
= \int \PP\left(\frac{\sqrt{n}}{\bar{\sigma}}(\hat{D} - D) \le tz \right) dF_{\hat{\sigma}}(t),
\end{align*}
}{
\begin{align*}
&\PP\left(\frac{\sqrt{n}}{\hat{\sigma}}(\hat{D} - D) \le z\right) \\
&= \int \PP\left(\frac{\sqrt{n}}{\bar{\sigma}}(\hat{D} - D) \le tz \right) dF_{\hat{\sigma}}(t),
\end{align*}
}
so that we can decompose the proximity to the standard normal CDF as:
\ifthenelse{\equal{\version}{arxiv}}{
\begin{align*}
&\sup_z \left|\PP\left(\frac{\sqrt{n}}{\hat{\sigma}}(\hat{D} - D) \le z\right)- \Phi(z)\right| \le \\
&\sup_z \int \left| \PP\left(\frac{\sqrt{n}}{\bar{\sigma}}(\hat{D} - D) \le tz\right) - \Phi(tz)\right| dF_{\hat{\sigma}}(t)
+ \sup_z \left| \int \Phi(tz)dF_{\hat{\sigma}}(t) - \Phi(z)\right|.
\end{align*}
}{
\begin{align*}
& \sup_z \left|\PP\left(\frac{\sqrt{n}}{\hat{\sigma}}(\hat{D} - D) \le z\right)- \Phi(z)\right| \\
& \le \sup_z \int \left| \PP\left(\frac{\sqrt{n}}{\bar{\sigma}}(\hat{D} - D) \le tz\right) - \Phi(tz)\right| dF_{\hat{\sigma}}(t)\\
& + \sup_z \left| \int \Phi(tz)dF_{\hat{\sigma}}(t) - \Phi(z)\right|.
\end{align*}
}
For the first term it is quite easy to eliminate the integral by pushing the supremum inside and replacing $tz$ with the variable being maximized.
This leads to:
\ifthenelse{\equal{\version}{arxiv}}{
\begin{align*}
& \sup_z \left| \PP\left(\frac{\sqrt{n}}{\bar{\sigma}}(\hat{D} - D) \le z\right) - \Phi(z) \right| \le \\
& \sup_z \left| \PP\left(\frac{\sqrt{n}}{\bar{\sigma}}(\hat{D} - \EE\hat{D}) \le z\right) - \Phi\left(z\right)\right|
+ \sup_z \left|\Phi\left(z - \frac{\sqrt{n}}{\bar{\sigma}}(\EE\hat{D} - D)\right) - \Phi(z)\right|,
\end{align*}
}{
\begin{align*}
& \sup_z \left| \PP\left(\frac{\sqrt{n}}{\bar{\sigma}}(\hat{D} - D) \le z\right) - \Phi(z) \right| \\
& \le \sup_z \left| \PP\left(\frac{\sqrt{n}}{\bar{\sigma}}(\hat{D} - \EE\hat{D}) \le z\right) - \Phi\left(z\right)\right|\\
& + \sup_z \left|\Phi\left(z - \frac{\sqrt{n}}{\bar{\sigma}}(\EE\hat{D} - D)\right) - \Phi(z)\right|,
\end{align*}
}
which follows by adding and subtracting $\EE\hat{D}$, adding and subtracting a term involving the Gaussian CDF and the bias and redefining $z$ in the first term. 
The first term on the right hand side involves the expression in Equation~\ref{eq:berry_esseen_quantity} and we will apply Theorem 10.4 from Chen et al. to control it~\cite{chen2010normal}.
The second term can be bounded since $\EE\hat{D} - D \asymp h^{2\beta}, \sigma = \Theta(1)$ and the Gaussian density is at most $(2\pi)^{-1/2}$. 
This gives:
\begin{align}
\sup_z \left|\Phi\left(z - \frac{\sqrt{n}}{\sigma}(\EE\hat{D} - D)\right) - \Phi(z)\right| \le c_b \sqrt{n}h^{2\beta}.
\label{eq:bias_cdf}
\end{align}

Returning to the term involving the variance estimator, we will need the following lemma, which bounds the error in the variance estimate:
\begin{lemma}
\label{lem:var_est}
Under Assumption~\ref{assump:main}, but with $h \asymp n^{\frac{-1}{2\beta+d}}$, we have that for any $\epsilon > 0$:
\ifthenelse{\equal{\version}{arxiv}}{
\begin{align}
\PP[|\hat{\sigma}^2 - \sigma^2| > \epsilon] \le C_1 \epsilon^{-1}n^{\frac{-\beta}{2\beta+d}}, \qquad
\PP[|\hat{\sigma}^2 - \bar{\sigma}^2| > \epsilon] \le C_2 \epsilon^{-1}n^{\frac{-\beta}{2\beta+d}}.
\end{align}
}
{
\begin{align}
\PP[|\hat{\sigma}^2 - \sigma^2| > \epsilon] &\le C_1 \epsilon^{-1}n^{\frac{-\beta}{2\beta+d}},\\
\PP[|\hat{\sigma}^2 - \bar{\sigma}^2| > \epsilon] &\le C_2 \epsilon^{-1}n^{\frac{-\beta}{2\beta+d}}.
\end{align}
}
\end{lemma}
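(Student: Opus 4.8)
The plan is to bound the first absolute moments $\EE|\hat\sigma^2-\sigma^2|$ and $\EE|\hat\sigma^2-\bar\sigma^2|$ by $O(n^{-\beta/(2\beta+d)})$ and then invoke Markov's inequality, which immediately gives both displayed tail bounds. First I would decompose $\sigma^2$ (and $\bar\sigma^2$) into its four variance terms and observe that each has the form $A-B^2$ for integral functionals $A,B$ of $p$ and $q$: for instance $\Var_{x\sim p}(p(x))=\int p^3\,d\mu-\theta_p^2$ and $\Var_{x\sim p}(q(x))=\int q^2 p\,d\mu-\theta_{pq}^2$, and analogously for the other two. The plugin estimator replaces $A$ by one of $\hat A\in\{\int\hat p^3,\int\hat q^3,\int\hat q^2\hat p,\int\hat p^2\hat q\}$ and $B$ by one of $\hat B\in\{\int\hat p^2,\int\hat q^2,\int\hat p\hat q\}$, where $\hat p,\hat q$ are kernel density estimates at bandwidth $h\asymp n^{-1/(2\beta+d)}$. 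By the triangle inequality, together with $B^2-\hat B^2=(B-\hat B)(B+\hat B)$, $\hat B=B+(\hat B-B)$, and $\hat B\ge 0$, it suffices to show for each of these finitely many functionals that $\EE|A-\hat A|$, $\EE|B-\hat B|$ and $\EE(B-\hat B)^2$ are all $O(n^{-\beta/(2\beta+d)})$; the factors $B$, $\theta_p$, $\theta_{pq}$ are bounded because $p,q$ are (so that $\int p^3$, $\int q^2p$, etc.\ are finite in the first place, as required already for Theorem~\ref{thm:normality}).

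Each $\hat A$ (resp.\ $\hat B$) is a degree-three (resp.\ degree-two) V-statistic in the kernel evaluations $Z_i(x)=h^{-d}K((x-X_i)/h)$, so I would control it by a bias/variance split, e.g.\ $\EE|A-\hat A|\le|\EE\hat A-A|+\sqrt{\Var\hat A}$. For the bias, the key elementary fact is that for $f\in\Wcal_1^\beta(C)$ and a kernel of order $\ge\beta$, a Taylor expansion of order $\beta$ combined with the vanishing-moment conditions on $K$ gives $\|K_h\star f-f\|_1\le c\,h^\beta$; expanding products such as $\bar p^3-p^3=(\bar p-p)(\bar p^2+\bar p p+p^2)$ and using $\|\bar p\|_\infty\le\|p\|_\infty$ then yields $|\EE\hat A-A|=O(h^\beta)+O(\tfrac{1}{nh^d})+O(\tfrac{1}{n^2h^{2d}})$, the last two coming from the $i=j$ and $i=j=k$ diagonal terms. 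For the variance, linearizing $\hat A$ about its mean expresses the leading fluctuation as an average of i.i.d.\ bounded variables, giving $\Var\hat A=O(\tfrac1n)$ plus lower-order diagonal contributions, and similarly $\EE(B-\hat B)^2=O(h^{2\beta}+\tfrac1n)$. With $h\asymp n^{-1/(2\beta+d)}$ one has $h^\beta=n^{-\beta/(2\beta+d)}$, $\tfrac1{nh^d}=n^{-2\beta/(2\beta+d)}$, $\tfrac1{n^2h^{2d}}=n^{-4\beta/(2\beta+d)}$ and $n^{-1/2}$, all of which are $O(n^{-\beta/(2\beta+d)})$ with the $h^\beta$ term dominating. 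Summing the four terms gives $\EE|\hat\sigma^2-\sigma^2|\le C_1 n^{-\beta/(2\beta+d)}$, which proves the first inequality. For the second, note that $\bar\sigma^2$ is exactly $\sigma^2$ with $p,q$ replaced by $\bar p,\bar q$ computed at the \emph{estimator's} bandwidth $h\asymp n^{-2/(4\beta+d)}$; the same convolution estimate gives $|\sigma^2-\bar\sigma^2|\le c\,h^\beta=O(n^{-2\beta/(4\beta+d)})=O(n^{-\beta/(2\beta+d)})$ deterministically, so $\EE|\hat\sigma^2-\bar\sigma^2|\le\EE|\hat\sigma^2-\sigma^2|+|\sigma^2-\bar\sigma^2|\le C_2 n^{-\beta/(2\beta+d)}$ and Markov again closes the argument.

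I expect the main obstacle to be the bookkeeping in the variance (and, for the $\hat B$ factor, second-moment) computations for the cubic plugins $\int\hat p^3$ and $\int\hat q^2\hat p$: one must expand the degree-three sums, separate the fully off-diagonal part — which linearizes to a benign i.i.d.\ average contributing only the $O(1/n)$ — from the partially and fully diagonal parts, which carry the dangerous negative powers of $h^d$, and check that the chosen bandwidth makes every such contribution $O(n^{-\beta/(2\beta+d)})$. Once the $L_1$ convolution-approximation estimate is in hand, the bias side is routine.
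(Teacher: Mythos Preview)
Your overall strategy --- bound $\EE|\hat\sigma^2-\sigma^2|$ term by term via the $A-B^2$ decomposition, apply Markov, then reach $\bar\sigma^2$ by the triangle inequality through $\sigma^2$ --- is exactly what the paper does. The only technical divergence is in how the cubic functionals $\int\hat f^3$ and $\int\hat f^2\hat g$ are controlled. You treat them as degree-three V-statistics and run a bias/variance split directly on the sum; the paper instead factors algebraically, e.g.\ $\hat f^3-f^3=(\hat f-f)^3+3f\hat f(\hat f-f)$, and then appeals to off-the-shelf KDE convergence rates $\EE\|\hat f-f\|_3^3$ and $\EE\|\hat f-f\|_\infty$ to obtain $O(h^\beta+(nh^d)^{-1/2})$ without any V-statistic expansion. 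For the $(\int\hat f^2)^2$ and $(\int\hat f\hat g)^2$ terms the paper simply recycles the MSE bounds already proved for $\hat\theta_p$ and $\hat\theta_{pq}$ (Theorems~\ref{thm:quad_rate} and~\ref{thm:bilinear_rate}). Both routes land on the same $n^{-\beta/(2\beta+d)}$ rate; the paper's is shorter because it outsources the work to standard density-estimation results, while yours is more self-contained but incurs precisely the diagonal bookkeeping you flag as the main obstacle. Two minor slips worth fixing: $\bar\sigma^2$ does not replace \emph{all} occurrences of $p,q$ by $\bar p,\bar q$ --- the outer sampling measures in $\Var_{x\sim p}(\cdot)$ stay as $p,q$ --- and $\|\bar p\|_\infty\le\|p\|_\infty$ fails for the higher-order kernels required here, though $\|\bar p\|_\infty\le\|K\|_1\|p\|_\infty$ suffices.
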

The first part of Lemma~\ref{lem:var_est} immediately gives the asymptotic confidence interval in Theorem~\ref{thm:asymp_conf}, as we have a consistent estimator of the asymptotic variance. 
The second part is used in the Berry-Ess\'{e}en bound. 

Notice that since $\bar{\sigma}, \bar{\sigma}^2 = \Theta(1)$ and since $\hat{\sigma}^2 > 0$, we also have that:
\begin{align*}
\PP[|\hat{\sigma} - \bar{\sigma}| > \epsilon] \le C \epsilon^{-1} n^{\frac{-\beta}{2\beta+d}},
\end{align*}
where the constant has changed slightly.
Since $F_{\hat{\sigma}}$ is the CDF for $\hat{\sigma}/\sigma$ and since the difference between two Gaussian CDFs is bounded by two, we therefore have,
\ifthenelse{\equal{\version}{arxiv}}{
\begin{align*}
\int_{-\infty}^{1-\epsilon}\Phi(tz) - \Phi(z)dF_{\hat{\sigma}}(t) + \int_{1+\epsilon}^{\infty}\Phi(tz) - \Phi(z)dF_{\hat{\sigma}}(t)
 \le C \epsilon^{-1}n^{\frac{-\beta}{2\beta+d}}.
\end{align*}
}{
\begin{align*}
\int_{-\infty}^{1-\epsilon}\Phi(tz) - \Phi(z)dF_{\hat{\sigma}}(t) + \int_{1+\epsilon}^{\infty}\Phi(tz) - \Phi(z)dF_{\hat{\sigma}}(t)\\
 \le C \epsilon^{-1}n^{\frac{-\beta}{2\beta+d}}.
\end{align*}
}
So we only have to consider the situation where $1-\epsilon \le t \le 1+\epsilon$. 
The difference between the Gaussian CDF at $z$ and $(1-\epsilon)z$ is small, since while the width of integration is growing linearly, the height of the integral is decaying exponentially. 
This term is maximized at $\pm 1$ and it is $O(\epsilon)$, so that the entire term depending on the variance estimate is:
\begin{align}
\left| \int \Phi(tz)dF_{\hat{\sigma}}(t) - \Phi(z)\right| \le O\left(\epsilon + n^{\frac{-\beta}{2\beta+d}}/\epsilon\right).
\label{eq:variance_cdf}
\end{align}
Optimizing over $\epsilon$ gives a rate of $O(n^{\frac{-\beta/2}{2\beta+d}})$. 

The Berry-Ess\'{e}en inequality applied to the term $\sqrt{n}\sigma^{-1}(\hat{D} - \EE\hat{D})$ reveals that:
\ifthenelse{\equal{\version}{arxiv}}{
\begin{align}
\sup_z \left| \PP\left(\frac{\sqrt{n}}{\sigma}(\hat{D} - \EE\hat{D}) \le z\right) - \Phi\left(z\right)\right|
\le O\left(n^{-1/2} + \frac{1}{\sqrt{nh^d}}\right),
\label{eq:berry_app}
\end{align}
}{
\begin{align}
& \sup_z \left| \PP\left(\frac{\sqrt{n}}{\sigma}(\hat{D} - \EE\hat{D}) \le z\right) - \Phi\left(z\right)\right| \nonumber \\
& \le O\left(n^{-1/2} + \frac{1}{\sqrt{nh^d}}\right),
\label{eq:berry_app}
\end{align}
}
where all of the constants can be tracked explicitly, although they depend on the unknown densities $p,q$. 
The application of the theorem from Chen et al. requires bounding various quantities related to the moments of the $U$-statistic.
All of these terms can be bounded using straightforward techniques and we defer these details along with some more careful book-keeping to the appendix.

Theorem~\ref{thm:berry} follows from the application of Berry-Ess\'{e}en in Equation~\ref{eq:berry_app}, the variance bound in Equation~\ref{eq:variance_cdf}, the bias bound in Equation~\ref{eq:bias_cdf} and our choice of bandwidth in Assumption~\ref{assump:main}. 

\subsection{Proof of Theorem~\ref{thm:l2_lower}}
The proof is a modification of Theorem 2 of~\cite{krishnamurthy2014nonparametric}.
The idea is to reduce the estimation problem to a simple hypothesis test, and then lower bound the probability of error by appealing to the Neyman-Pearson Lemma.
If the null and alternative hypotheses, which will consist of pairs of distributions, are well separated, in the sense that the $L_2^2$ divergence of the null hypothesis is far from the divergence of the alternative, then a lower bound on the probability of error immediately lower bounds the estimation error. 
This argument is formalized in the following Lemma from~\cite{krishnamurthy2014nonparametric}, which is a consequence of Theorem 2.2 of Tsybakov~\cite{tsybakov2009introduction}.

\begin{lemma}[\cite{krishnamurthy2014nonparametric}]
\label{lem:lecam}
Let $\Lambda$ be an index set and let $p_0,q_0 p_\lambda \forall \lambda \in \Lambda$ be densities (with corresponding distribution functions $P_0,Q_0,P_\lambda$) belonging to a function space $\Theta$. 
Let $T$ be a bivariate functional defined on some subset of $\Theta \times \Theta$ which contains $(p_0,q_0)$ and $(p_\lambda, q_0) \forall \lambda \in \Lambda$. 
Define $\overline{P^n} = \frac{1}{|\Lambda|}\sum_{\lambda \in \Lambda} P_\lambda^n$.
If:
\ifthenelse{\equal{\version}{arxiv}}{
\begin{align*}
h^2(P_0^n \times Q_0^n, \overline{P^n} \times Q_0^n) \le \gamma < 2, \textrm{ and }
T(p_0,q_0) \ge 2\beta + T(p_\lambda, q) \forall \lambda \in \Lambda
\end{align*}
}{
\begin{align*}
&h^2(P_0^n \times Q_0^n, \overline{P^n} \times Q_0^n) \le \gamma < 2\\
&T(p_0,q_0) \ge 2\beta + T(p_\lambda, q) \forall \lambda \in \Lambda
\end{align*}
}
Then,
\begin{align}
\inf_{\hat{T}_n} \sup_{p,q \in \Theta} \PP_{p,q}^n\left[|\hat{T}_n - T(p,q)| > \beta\right] \ge c_\gamma
\end{align}
where $c_\gamma = \frac{1}{2}[1 - \sqrt{\gamma (1-\gamma/4)}]$.
\end{lemma}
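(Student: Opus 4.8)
The plan is the classical reduction of estimation to hypothesis testing, which is precisely how Tsybakov's Theorem~2.2 is established. Fix an arbitrary estimator $\hat T_n$ of $T$ and define the test $\psi = \mathbf{1}\{\hat T_n < T(p_0,q_0) - \beta\}$. First I would check that $\psi$ can only fail on the event that $\hat T_n$ is more than $\beta$ from the true parameter value: under the null pair $(p_0,q_0)$, the event $\{\psi = 1\}$ gives $T(p_0,q_0) - \hat T_n > \beta$, hence $|\hat T_n - T(p_0,q_0)| > \beta$; under an alternative pair $(p_\lambda, q_0)$, the event $\{\psi = 0\}$ gives $\hat T_n \ge T(p_0,q_0) - \beta$, which together with the separation hypothesis $T(p_0,q_0) - T(p_\lambda,q_0) \ge 2\beta$ forces $\hat T_n - T(p_\lambda,q_0) \ge \beta$. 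Consequently
\[
\sup_{p,q \in \Theta} \PP^n_{p,q}\big[\,|\hat T_n - T(p,q)| > \beta\,\big] \ \ge\ \max\Big\{\ \PP^n_{p_0,q_0}(\psi = 1),\ \tfrac{1}{|\Lambda|}\textstyle\sum_{\lambda \in \Lambda} \PP^n_{p_\lambda, q_0}(\psi = 0)\ \Big\}.
\]

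Next I would identify the averaged alternative with a single product measure. Writing $\mathbb{P}_0 = P_0^n \times Q_0^n$ and $\mathbb{P}_1 = \overline{P^n} \times Q_0^n$, the data under the alternative indexed by a uniformly random $\lambda$ has law exactly $\mathbb{P}_1$ (the $X$-sample mixes over $P_\lambda^n$ while the $Y$-sample is always $Q_0^n$), so by linearity $\frac{1}{|\Lambda|}\sum_\lambda \PP^n_{p_\lambda,q_0}(\psi = 0) = \mathbb{P}_1(\psi = 0)$. The right-hand side of the display is therefore at least $\tfrac12\big(\mathbb{P}_0(\psi = 1) + \mathbb{P}_1(\psi = 0)\big) \ge \tfrac12 \inf_{\phi}\big(\mathbb{P}_0(\phi=1) + \mathbb{P}_1(\phi=0)\big)$, the infimum over all tests $\phi$, and this minimal total error equals $1 - \|\mathbb{P}_0 - \mathbb{P}_1\|_{\mathrm{TV}}$, attained by the likelihood-ratio test via Neyman--Pearson.

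Finally I would convert the hypothesized bound on the squared Hellinger distance $h^2(\mathbb{P}_0,\mathbb{P}_1) \le \gamma$ into a bound on total variation. By Cauchy--Schwarz applied to $\int |\sqrt{d\mathbb{P}_0} - \sqrt{d\mathbb{P}_1}|\,|\sqrt{d\mathbb{P}_0} + \sqrt{d\mathbb{P}_1}|$ and the identity $\int(\sqrt{d\mathbb{P}_0}+\sqrt{d\mathbb{P}_1})^2 = 4 - h^2(\mathbb{P}_0,\mathbb{P}_1)$, one obtains $\|\mathbb{P}_0 - \mathbb{P}_1\|_{\mathrm{TV}} \le \sqrt{h^2(\mathbb{P}_0,\mathbb{P}_1)\,(1 - h^2(\mathbb{P}_0,\mathbb{P}_1)/4)}$; since $t \mapsto t(1 - t/4)$ is increasing on $[0,2]$ and $h^2(\mathbb{P}_0,\mathbb{P}_1) \le \gamma < 2$, this gives $\|\mathbb{P}_0 - \mathbb{P}_1\|_{\mathrm{TV}} \le \sqrt{\gamma(1 - \gamma/4)}$. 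Chaining the three steps and taking the infimum over $\hat T_n$ yields
\[
\inf_{\hat T_n} \sup_{p,q \in \Theta} \PP^n_{p,q}\big[\,|\hat T_n - T(p,q)| > \beta\,\big] \ \ge\ \tfrac12\big(1 - \sqrt{\gamma(1 - \gamma/4)}\big) = c_\gamma .
\]

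There is no deep obstacle here: the lemma packages the reduction-to-testing argument, the mixture form of Le Cam's two-point bound, and the elementary Hellinger-to-total-variation inequality. The only care needed is bookkeeping: keeping the second coordinate fixed at $q_0$ so that the mixture affects only the $X$-marginal and the identity $\frac{1}{|\Lambda|}\sum_\lambda \PP^n_{p_\lambda,q_0}(\psi=0)=\mathbb{P}_1(\psi=0)$ is literally valid (note $T$ is only ever evaluated at the genuine pairs $(p_0,q_0)$ and $(p_\lambda,q_0)$, never at the mixture, so no definedness issue arises), and a cosmetic point about strict versus non-strict inequalities in the separation and error events, which is absorbed by reading the separation hypothesis as strict at no cost.
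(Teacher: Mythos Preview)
Your proof is correct. The paper does not supply its own proof of this lemma: it is quoted from \cite{krishnamurthy2014nonparametric} and attributed to Theorem~2.2 of Tsybakov~\cite{tsybakov2009introduction}, and your argument---reduce estimation to a two-point test between $\mathbb{P}_0 = P_0^n \times Q_0^n$ and the mixture $\mathbb{P}_1 = \overline{P^n} \times Q_0^n$, use Neyman--Pearson to obtain the $1 - \|\mathbb{P}_0-\mathbb{P}_1\|_{\mathrm{TV}}$ lower bound, then bound total variation by Hellinger via Cauchy--Schwarz---is exactly the standard derivation the paper is invoking.
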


Equipped with the above lemma, we can lower bound the rate of convergence by constructing densities $p_\lambda$ satisfying the bounded variation assumption, checking that they are well separated in the $L_2^2$ divergence sense, and bounding the hellinger distance.
We use the same construction as Krishnamurthy et al. and can therefore apply their hellinger distance bound (which is originally from Birge and Massart~\cite{birge1995estimation}).



We defer verifying the bounded variation assumption and the separation in $L_2^2$ divergence to the appendix as the arguments are a fairly technical and require several new definitions.
There, we show that the functions $p_\lambda$ can be chosen to belong to $\Wcal_1^\beta(C)$, have separation $\beta = n^{-\frac{4\beta}{4\beta+d}}$ (in absolute error), with $\gamma = O(1)$, resulting in the desired lower bound. 
The $n^{-1}$ term in the lower bound follows from a standard application of Le Cam's method (See Krishnamurthy et al.~\cite{krishnamurthy2014nonparametric}).

\ifthenelse{\equal{\version}{arxiv}}{\section{Experiments}}{\section{EXPERIMENTS}}
\ifthenelse{\equal{\version}{arxiv}}{
\begin{figure*}[t]
\begin{center}
\includegraphics[scale=0.25]{./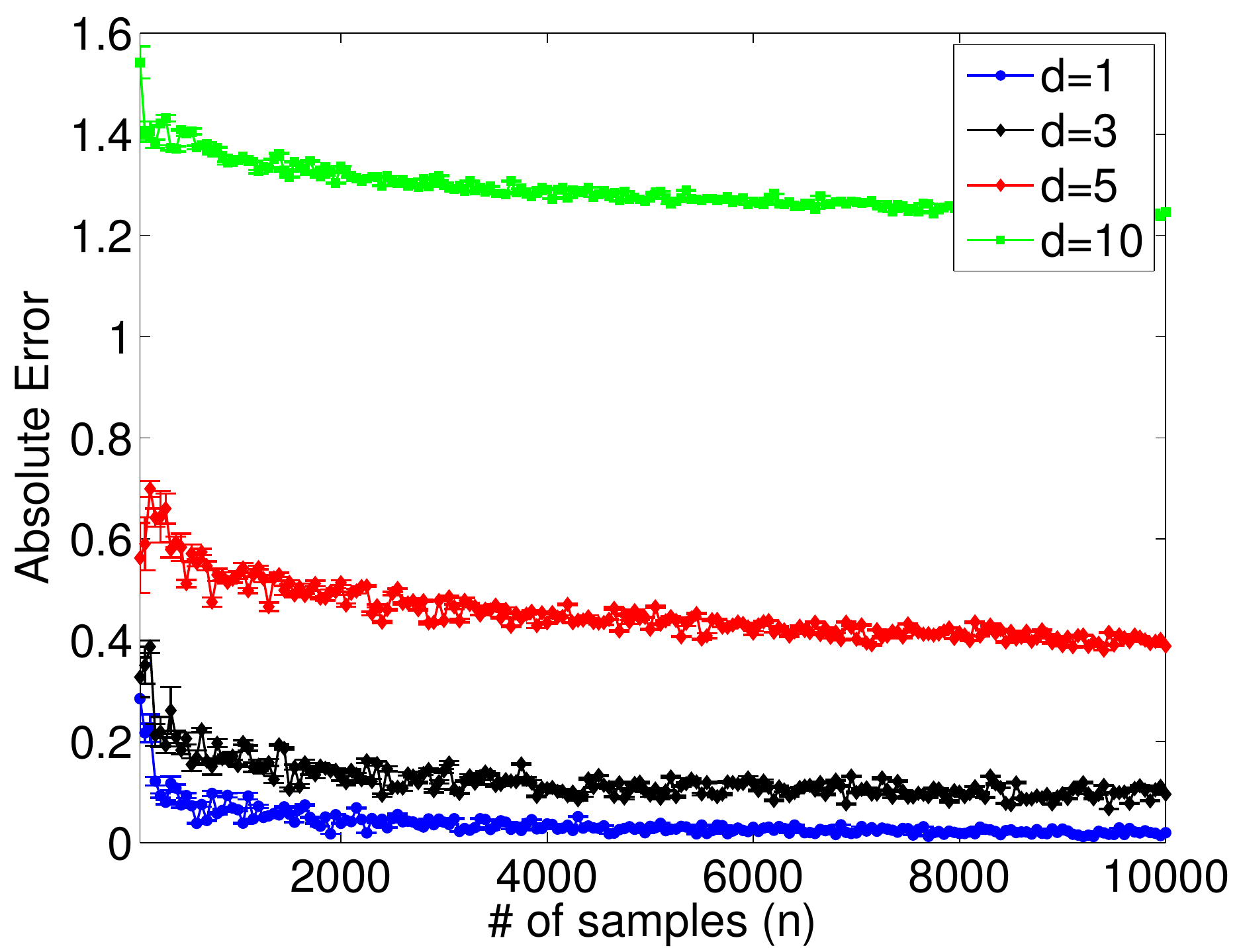}
\includegraphics[scale=0.25]{./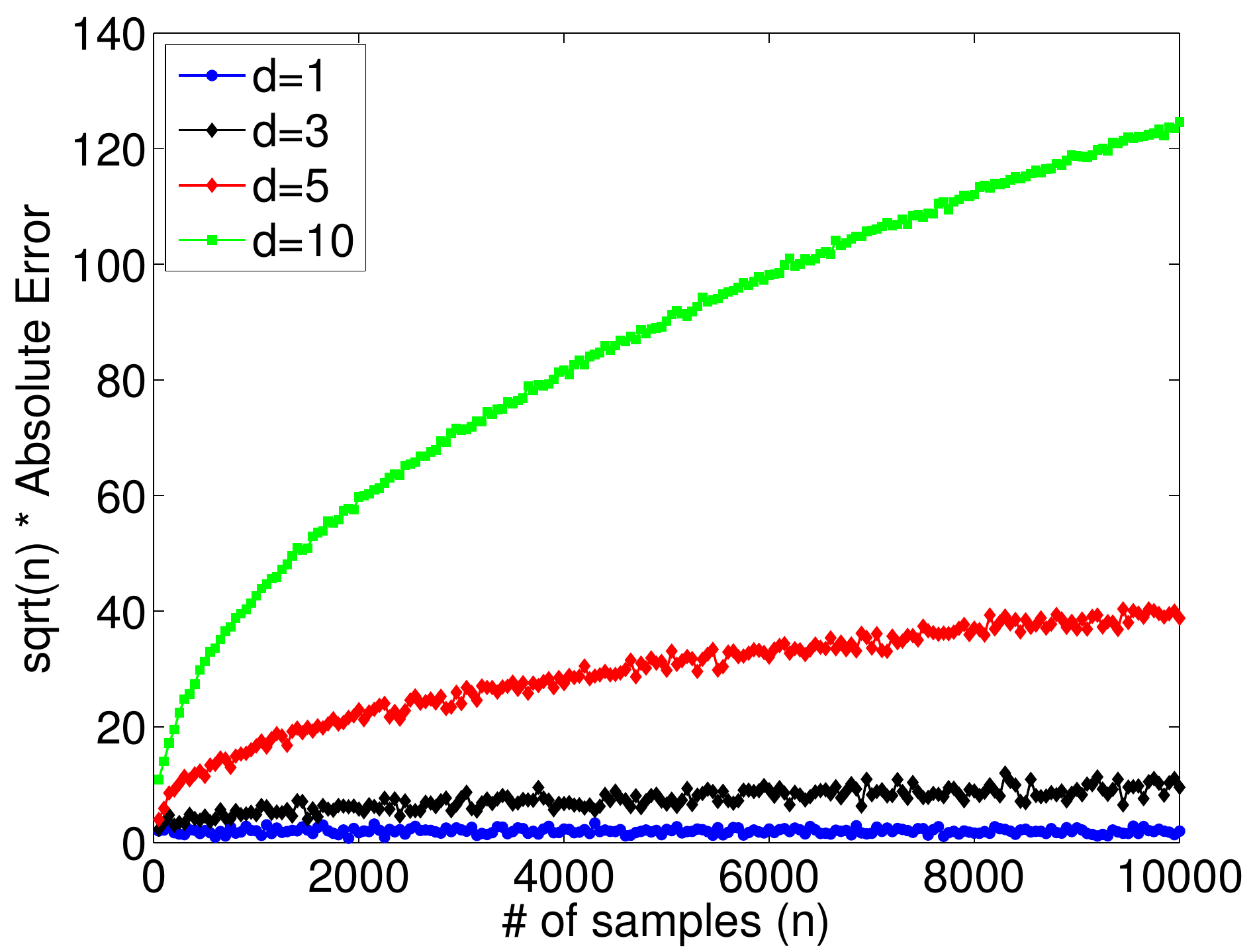}
\includegraphics[scale=0.25]{./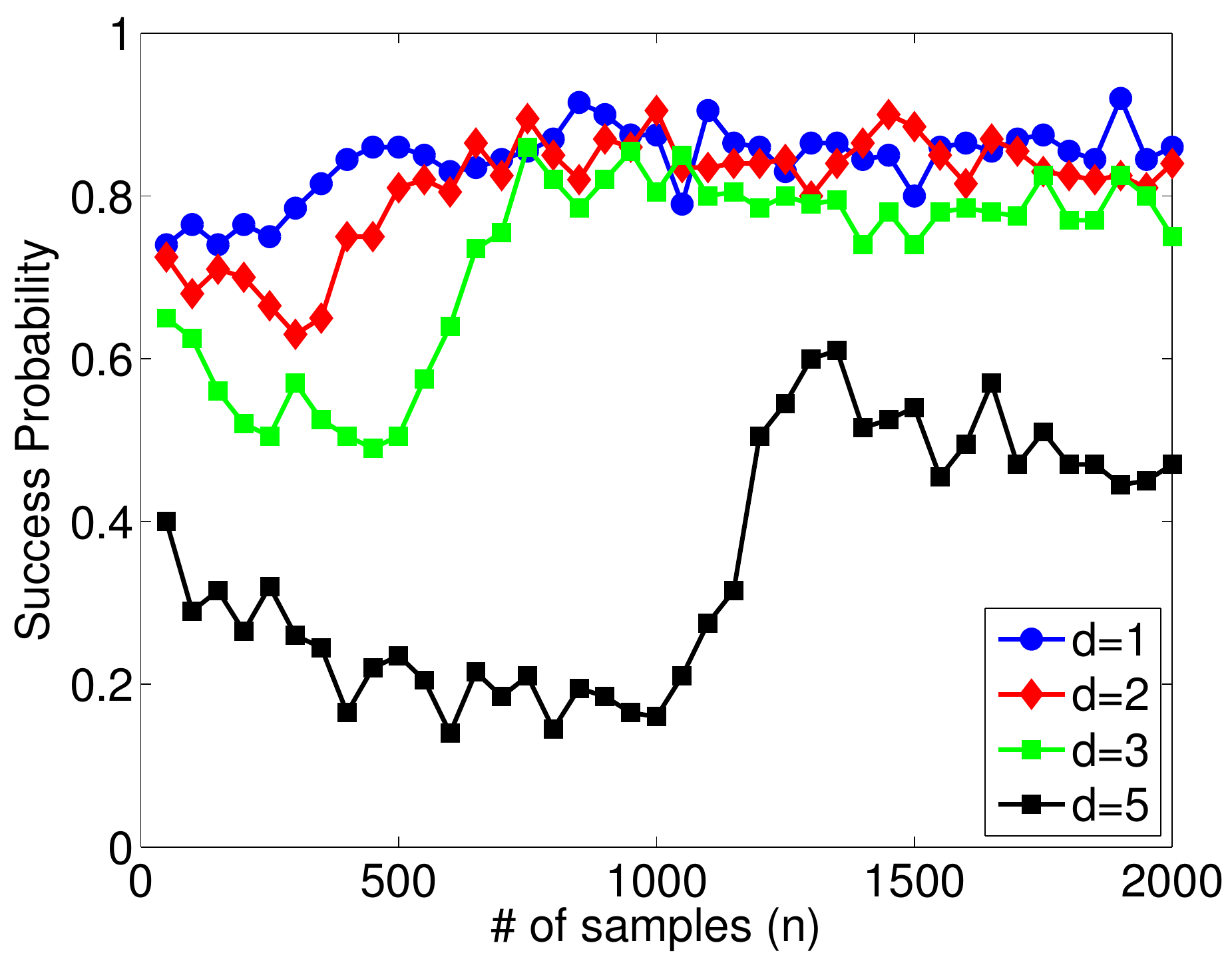}
\end{center}
\caption{Simulation results showing the convergence rate of the error, rescaled convergence rate, and performance of the confidence interval (from left to right).}
\label{fig:simulations}
\end{figure*}
}{
\begin{figure*}[t]
\includegraphics[scale=0.28]{./conv.pdf}
\includegraphics[scale=0.28]{./conv_rescaled.pdf}
\includegraphics[scale=0.28]{./ci_trap_large.pdf}\\
\caption{Simulation results showing the convergence rate of the error, rescaled convergence rate, and performance of the confidence interval (from left to right).}
\label{fig:simulations}
\end{figure*}
}
The results of our simulations are in Figure~\ref{fig:simulations}.
For the first two plots, we trained our estimator on data generated from two Gaussian with means $(0,\ldots,0) \in \RR^d$ and $(1,\ldots,1) \in \RR^d$.
Note that the true $L_2^2$ distance can be analytically computed and is $\frac{2}{(2 \sqrt{\pi})^d} \left(1 - e^{-d/4}\right)$.
The bandwidth is chosen to scale appropriately with the number of samples and we use a Gaussian kernel.
Observe also that the Gaussian distribution satisfies the bounded variation assumption for any $\beta$ but that the Gaussian kernel does not meet our kernel requirements. 

In the first plot, we record the relative error $\frac{|\hat{D} - D|}{D^{-1}}$ of the estimator as a function of the number of samples for four different problem dimensions.
We use relative error in this plot to ensure that the curves are on the same scale, as the $L_2$-divergence between Gaussians decreases exponentially with dimension.
In the second plot, we rescale the relative error by $\sqrt{n}$. 

The first plot shows that the error is indeed converging to zero and that the relative error increases with dimension.
In the second plot, we see that the rescaled error curves all flatten out, confirming the $n^{-1/2}$ convergence rate in the $\ell_1$ metric.
However, notice that both the asymptote and the sample size at which the curves flatten out is increasing with dimension.
The latter suggests that, in high dimension, one needs a large number of samples before the $\sqrt{n}$-rate comes into effect.
It also suggests that there may be curse-of-dimensionality effect that is not captured by our analysis, as we think of $d$ as fixed throughout. 


In the third plot, we explore the empirical properties of our confidence interval.
As before, we generate data from two Gaussian distributions, compute the confidence interval and record whether the interval traps the true parameter or not.
In the figure, we plot the empirical probability that the $90\%$ confidence interval traps the true parameter as a function of the number of samples.
In low dimension, the confidence interval seems to be quite accurate as the empirical probability approaches $90\%$. 
However, even in moderate dimension, the confidence interval is less effective, as the sample size is too small for the asymptotic approximation to be accurate.
This is confirmed by the previous figure, as the sample size must be quite large for the $\sqrt{n}$-asymptotics to take effect.

While we are not aware of better confidence intervals in the general setting, significant improvement is possible in the special case of two-sample testing, where only a confidence interval around the null hypothesis of $p=q$ is necessary.
Here, rather than using the $1-\alpha/2$th quantile of the asymptotic distribution for designing the test, we recommend performing a permutation test, which gives an exact confidence interval under the null.
Of course this is also possible with the MMD, and indeed this is the recommended MMD-based two-sample test procedure~\cite{gretton2012kernel}.

Since one does not appeal to the limiting distribution in a permutation test, it has an added benefit of not requiring data splitting between the squared and cross terms of the $L_2^2$-divergence estimator.
While data-splitting played essentially no role our analysis, it leads to a noticeable decrease in power empirically.
Unfortunately, in the more general setting where one wants a confidence interval for $D(p,q)$, we are not aware of a better approach than our proposal.

\ifthenelse{\equal{\version}{arxiv}}{\section{Discussion}}{\section{DISCUSSION}}
\label{sec:discussion}

In this paper we studied a simple estimator for the $L_2^2$ divergence between continuous distributions. 
We showed that the estimator achieves the parametric $\sqrt{n}$ rate of convergence as soon as the densities have $d/4$-orders of smoothness. 
We also proved asymptotic normality, derived an asymptotic confidence interval, and characterized the quality of the asymptotic approximation with a Berry-Ess\'{e}en style inequality. 
Lastly we used information theoretic techniques to show that our estimator achieves the minimax optimal rate.
This gives a thorough characterization of the theoretical properties of this estimator. 



While our theoretical results are quite comprehensive, a number of questions still remain. 
First, despite enjoying a $\sqrt{n}$-rate in a fixed-dimension analysis, our simulations suggest that the performance degrades drastically with dimension.
This phenomenon is worth investigating further, both for our estimator and for other nonparametric methods.
It may be the case that fixed dimension asymptotic arguments are not appropriate for these semiparametric problems, as they hide a curse of dimensionality phenomenon.


It is also worth exploring how the $L_2^2$ divergence estimator and other nonparametric functionals can be used algorithmically in learning problems. 
One challenging problem involves optimizing a nonparametric functional over a finite family of distributions in an active learning setting (for example, finding the closest distribution to a target).
Here the so-called Hoeffding racing algorithm, which carefully constructs confidence intervals and focuses samples on promising distributions, has been used in the discrete setting with considerable success~\cite{loh2013faster}.
This algorithm relies heavily on exact finite sample confidence intervals that are largely absent from the nonparametrics literature, so extension to continuous distributions would require new theoretical developments.

Regarding two sample testing, an important open question is to identify which test statistic is best for a particular problem.
To our knowledge, little progress has been made in this direction.

\ifthenelse{\equal{\version}{arxiv}}{}{We hope to explore these directions in future work. }

\section*{Acknowledgements}
This research is supported by DOE grant DESC0011114, NSF Grants DMS-0806009, IIS1247658, and IIS1250350, and Air Force Grant FA95500910373.
AK is supported in part by a NSF Graduate Research Fellowship.
AK would also like to thank Arthur Gretton and Aaditya Ramdas for several fruitful discussions. 

\bibliography{bibliography}

\begin{thebibliography}{10}

\bibitem{anderson1994two}
Niall~H. Anderson, Peter Hall, and D.~Michael Titterington.
\newblock {Two-sample test statistics for measuring discrepancies between two
  multivariate probability density functions using kernel-based density
  estimates}.
\newblock {\em Journal of Multivariate Analysis}, 1994.

\bibitem{bickel1988estimating}
Peter Bickel and Ya'acov Ritov.
\newblock {Estimating integrated squared density derivatives: sharp best order
  of convergence estimates}.
\newblock {\em Sankhy\={a}: The Indian Journal of Statistics, Series A}, 1988.

\bibitem{birge1995estimation}
Lucien Birg\'{e} and Pascal Massart.
\newblock {Estimation of integral functionals of a density}.
\newblock {\em The Annals of Statistics}, 1995.

\bibitem{chen2010normal}
Louis~H.Y. Chen, Larry Goldstein, and Qi-Man Shao.
\newblock {\em {Normal Approximation by Stein’s Method}}.
\newblock Springer, 2010.

\bibitem{gine2008simple}
Evarist Gin\'{e} and Richard Nickl.
\newblock {A simple adaptive estimator of the integrated square of a density}.
\newblock {\em Bernoulli}, February 2008.

\bibitem{gretton2012kernel}
Arthur Gretton, Karsten~M. Borgwardt, Malte~J. Rasch, Bernhard Sch\"{o}lkopf,
  and Alexander Smola.
\newblock {A kernel two-sample test}.
\newblock {\em The Journal of Machine Learning Research}, March 2012.

\bibitem{kallberg2012estimation}
David K\"{a}llberg and Oleg Seleznjev.
\newblock {Estimation of entropy-type integral functionals}.
\newblock {\em arXiv:1209.2544}, 2012.

\bibitem{kerkyacharian1996estimating}
G\'{e}rard Kerkyacharian and Dominique Picard.
\newblock {Estimating nonquadratic functionals of a density using Haar
  wavelets}.
\newblock {\em The Annals of Statistics}, 1996.

\bibitem{krishnamurthy2014nonparametric}
Akshay Krishnamurthy, Kirthevasan Kandasamy, Barnabas Poczos, and Larry
  Wasserman.
\newblock {Nonparametric Estimation of Renyi Divergence and Friends}.
\newblock In {\em International Conference on Machine Learning}, 2014.

\bibitem{laurent1996efficient}
B\'{e}atrice Laurent.
\newblock {Efficient estimation of integral functionals of a density}.
\newblock {\em The Annals of Statistics}, 1996.

\bibitem{leonenko2008class}
Nikolai Leonenko, Luc Pronzato, and Vippal Savani.
\newblock {A class of R\'{e}nyi information estimators for multidimensional
  densities}.
\newblock {\em The Annals of Statistics}, 2008.

\bibitem{loh2013faster}
Po-Ling Loh and Sebastian Nowozin.
\newblock Faster hoeffding racing: Bernstein races via jackknife estimates.
\newblock In {\em Algorithmic Learning Theory}, 2013.

\bibitem{nguyen2010estimating}
XuanLong Nguyen, Martin~J. Wainwright, and Michael~I. Jordan.
\newblock {Estimating divergence functionals and the likelihood ratio by convex
  risk minimization}.
\newblock {\em IEEE Transactions on Information Theory}, 2010.

\bibitem{perez2008kullback}
Fernando P\'{e}rez-Cruz.
\newblock {Kullback-Leibler divergence estimation of continuous distributions}.
\newblock In {\em IEEE International Symposium on Information Theory}, 2008.

\bibitem{poczos2011estimation}
Barnab\'{a}s P\'{o}czos and Jeff Schneider.
\newblock {On the estimation of alpha-divergences}.
\newblock In {\em International Conference on Artificial Intelligence and
  Statistics}, 2011.

\bibitem{poczos2012nonparametric}
Barnab\'{a}s P\'{o}czos, Liang Xiong, Dougal~J. Sutherland, and Jeff Schneider.
\newblock {Nonparametric kernel estimators for image classification}.
\newblock In {\em IEEE Conference on Computer Vision and Pattern Recognition},
  2012.

\bibitem{ram2009linear}
Parikshit Ram, Dongryeol Lee, William~B. March, and Alexander~G. Gray.
\newblock {Linear-time algorithms for pairwise statistical problems}.
\newblock In {\em Advances in Neural Information Processing Systems}, 2009.

\bibitem{tsybakov2009introduction}
Alexandre~B. Tsybakov.
\newblock {\em {Introduction to nonparametric estimation}}.
\newblock Springer, 2009.

\end{thebibliography}
\bibliographystyle{plain}

\appendix
\section{Proof of Theorems~\ref{thm:rate} and~\ref{thm:normality}}
We analyze the two estimators separately and the proof of Theorem~\ref{thm:rate} follows immediately from Theorems~\ref{thm:quad_rate} and Theorems~\ref{thm:bilinear_rate} below. 

For the quadratic term estimators, we make a slight modification to a theorem from Gine and Nickl~\cite{gine2008simple}. 
The only difference between our proof and theirs is in controlling the bias, where we use the bounded-variation assumption while they use a Sobolev assumption. 
However this has little bearing, as the bias is still of the same order, and we have the following theorem characterizing the behavior of the quadratic estimator:

\begin{theorem}[Adapted from~\cite{gine2008simple}]
\label{thm:quad_rate}
Under Assumption~\ref{assump:main}, we have:
\begin{align}
\left| \EE[\hat{\theta}_p] - \theta_p \right| \le c_b h^{2\beta} \qquad
\EE\left[ (\hat{\theta}_p - \EE[\theta_p])^2\right] \le c_v \left(\frac{1}{n} + \frac{1}{n^2h^d}\right),
\label{eq:quadratic_bias_variance}
\end{align}
and  when $\beta > d/4$:
\begin{align}
\sqrt{n}(\hat{\theta}_p - \theta_p) \leadsto \Ncal(0, 4\Var_{x \sim p}(p(x))).
\end{align}
\end{theorem}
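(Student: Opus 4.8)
The plan is to treat $\hat{\theta}_p$ as a second-order $U$-statistic with symmetric kernel $g_h(x,y) = h^{-d}K((x-y)/h)$ and to follow the structure of Gin\'e and Nickl, changing only the bias argument to match the bounded variation assumption. I would organize the proof into three pieces: the bias, the variance, and a Hoeffding decomposition for the central limit theorem. Throughout I use that densities in $\Wcal_1^\beta(C)$, together with their derivatives of order $< \beta$, are bounded.

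For the bias, I would first use the symmetry of $K$ and $\int K = 1$ to write $\EE[\hat{\theta}_p] = \int (K_h \star p)(x)\,p(x)\,d\mu(x)$, so that after the substitution $u = (x-y)/h$ the bias equals $\int K(u)\,[(p_0 \star p)(uh) - (p_0 \star p)(0)]\,du$ with $p_0(x) = p(-x)$. The key point is that convolution doubles bounded variation smoothness: for any multi-index $r = r' + r''$ with $|r'|,|r''| \le \beta$, Young's inequality gives $\|D^r(p_0 \star p)\|_1 = \|(D^{r'}p_0)\star(D^{r''}p)\|_1 \le \|D^{r'}p_0\|_1\,\|D^{r''}p\|_1 \le C^2$, so $p_0 \star p \in \Wcal_1^{2\beta}(C^2)$; combined with boundedness of $p$ and its low-order derivatives this makes $p_0 \star p$ continuously differentiable to order $2\beta-1$ with bounded derivatives. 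I would then Taylor-expand $(p_0\star p)(uh)$ about $0$ to order $2\beta-1$: because the kernel is of order $2\beta$, every monomial of degree $1,\dots,2\beta-1$ is annihilated by $\int K(u)(\cdot)\,du$, and the remainder contributes $O\!\left(h^{2\beta}\int |K(u)|\,|u|^{2\beta}\,du\right) = O(h^{2\beta})$ since $K$ is compactly supported, giving $|\EE[\hat{\theta}_p] - \theta_p| \le c_b h^{2\beta}$.

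For the variance, I would invoke the standard identity for a second-order $U$-statistic, $\Var(\hat{\theta}_p) \le c(\zeta_1/n + \zeta_2/n^2)$, where $\zeta_1 = \Var_{x\sim p}((K_h\star p)(x))$ and $\zeta_2 = \Var(g_h(X_1,X_2))$. Since $K_h\star p$ is uniformly bounded, $\zeta_1 = O(1)$; and $\zeta_2 \le \EE[g_h(X_1,X_2)^2] = h^{-d}\int\!\int K(u)^2 p(y+uh)p(y)\,du\,d\mu(y) \le \|K\|_2^2\,\|p\|_\infty\, h^{-d}$, so $\zeta_2 = O(h^{-d})$, yielding the claimed $c_v(\tfrac1n + \tfrac1{n^2h^d})$ rate. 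For asymptotic normality when $\beta > d/4$, I would use Hoeffding's decomposition $\hat{\theta}_p - \theta_p = (\EE[\hat{\theta}_p] - \theta_p) + \tfrac2n\sum_i [(K_h\star p)(X_i) - \EE(K_h\star p)(X_1)] + R_n$, where $R_n$ is a degenerate $U$-statistic with $\Var(R_n) = O(\zeta_2/n^2) = O(1/(n^2h^d))$. The prescribed bandwidth makes $nh^d \asymp n^{(4\beta-d)/(4\beta+d)} \to \infty$ exactly when $\beta > d/4$, so $\sqrt n\,R_n \to 0$ in quadratic mean, and the same regime gives $\sqrt n(\EE[\hat{\theta}_p] - \theta_p) = O(\sqrt n\,h^{2\beta}) = o(1)$. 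In the linear term I would replace $K_h\star p$ by $p$ at a cost that is negligible in $L_2(P)$, since $\EE_{x\sim p}[((K_h\star p)(x) - p(x))^2] \le \|p\|_\infty\|K_h\star p - p\|_2^2 \to 0$; what remains is $\tfrac{2}{\sqrt n}\sum_i(p(X_i) - \theta_p)$, a normalized sum of i.i.d.\ terms with finite variance $4\Var_{x\sim p}(p(x))$, so the Lindeberg--L\'evy CLT applies and Slutsky's theorem assembles the three pieces into the stated limit.

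The main obstacle is the bias step. The naive smoothing estimate only yields $O(h^\beta)$, and obtaining the sharper $O(h^{2\beta})$ rate --- which is precisely what makes the parametric rate and asymptotic normality possible --- requires recognizing that the quadratic-in-$p$ structure of $\theta_p$ turns the relevant object into the self-convolution $p_0\star p$, whose smoothness doubles by Young's inequality, and then carefully pairing the extra vanishing moments of the order-$2\beta$ kernel with a Taylor expansion at the doubled order while controlling the derivative bounds that come out of that inequality. The bilinear term $\hat{\theta}_{p,q}$ is handled identically, with $p_0 \star q$ in place of $p_0\star p$ and a sum over an independent block of samples replacing the $U$-statistic sum; since the cross samples are independent, the conditional-variance and Hoeffding-projection computations go through with only cosmetic changes.
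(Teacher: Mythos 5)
Your proposal is correct and follows essentially the same route as the paper: the bias via the self-convolution $p_0\star p$, Young's inequality to double the bounded-variation order, and a Taylor expansion of order $2\beta-1$ paired with the order-$2\beta$ kernel; the variance via the Hoeffding projections (the paper just carries out the index-counting that underlies the standard $\zeta_1/n+\zeta_2/n^2$ formula explicitly); and normality via Hoeffding's decomposition, quadratic-mean negligibility of the degenerate term under $nh^d\to\infty$, replacement of $K_h\star p$ by $p$ in the linear term, and the Lindeberg--L\'{e}vy CLT. No substantive differences.
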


While we are not aware of any analyses of the bilinear term, it is not particularly different from the quadratic term, and we have the following theorem:
\begin{theorem}
\label{thm:bilinear_rate}
Under Assumption~\ref{assump:main}, we have:
\begin{align}
\left| \EE[\hat{\theta}_{pq}] - \theta_{pq} \right| \le c_b h^{2\beta} \qquad
\EE\left[ (\hat{\theta}_{pq} - \EE[\theta_{pq}])^2\right] \le c_v \left(\frac{1}{n} + \frac{1}{n^2h^d}\right),
\label{eq:bilinear_bias_variance}
\end{align}
and when $\beta > d/4$:
\begin{align}
\sqrt{n}(\hat{\theta}_{pq} - \theta_{pq}) \leadsto \Ncal(0, \Var_{x \sim p}(q(x)) + \Var_{y \sim q}(p(y))).
\end{align}
\end{theorem}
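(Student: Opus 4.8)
The plan is to establish the three claims --- the bias bound, the variance bound, and asymptotic normality --- separately, throughout viewing $\hat{\theta}_{pq}$ as a two-sample $U$-statistic of order $(1,1)$ with kernel $g_h(x,y) = h^{-d}K((x-y)/h)$; since the $X_i$'s and $Y_j$'s are drawn independently, every one of the $n^2$ summands involves two independent draws, which makes the Hoeffding decomposition clean. For the bias, I would change variables and use the symmetry of $K$ to write $\EE[\hat{\theta}_{pq}] = \int\int K(u)\,p(x-uh)q(x)\,du\,dx$, then subtract $\theta_{pq} = \int\int K(u)p(x)q(x)\,du\,dx$ (using $\int K = 1$) to get $\EE[\hat{\theta}_{pq}] - \theta_{pq} = \int K(u)\big[(p_0\star q)(uh) - (p_0\star q)(0)\big]\,du$ with $p_0(x) = p(-x)$, and then invoke the convolution-smoothness lemma: for any multi-index $r$ with $|r| \le 2\beta$ one may split $r = r' + r''$ with $|r'|,|r''| \le \beta$, so that $D^r(p_0\star q) = (D^{r'}p_0)\star(D^{r''}q)$ and Young's inequality gives $\|D^r(p_0\star q)\|_1 \le \|D^{r'}p_0\|_1\|D^{r''}q\|_1 \le C^2$, i.e.\ $g := p_0\star q \in \Wcal_1^{2\beta}(C^2)$. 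A Taylor expansion of $g$ about the origin to order $2\beta-1$, integrated against $K$, has every polynomial term annihilated by the order-$2\beta$ vanishing-moment conditions on $K$, leaving only an $O(h^{2\beta})$ remainder, which yields $|\EE[\hat{\theta}_{pq}] - \theta_{pq}| \le c_b h^{2\beta}$.

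For the variance, I would expand $\Var(\hat{\theta}_{pq}) = n^{-4}\sum_{i,j,s,t}\mathrm{Cov}\big(g_h(X_i,Y_j),\,g_h(X_s,Y_t)\big)$ and split according to index coincidences. When $i \ne s$ and $j \ne t$ all four variables are independent and the covariance vanishes; when exactly one pair of indices agrees, the covariance equals $\Var_{x\sim p}\big((K_h\star q)(x)\big)$ or $\Var_{y\sim q}\big((K_h\star p)(y)\big)$, which is $O(1)$, and there are $O(n^3)$ such terms, contributing $O(1/n)$; when $i = s$ and $j = t$, the covariance is at most $\EE[g_h^2] = O(h^{-d})$, and there are $n^2$ such terms, contributing $O(1/(n^2 h^d))$. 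Summing gives the stated $c_v(1/n + 1/(n^2 h^d))$ bound.

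For asymptotic normality when $\beta > d/4$, I would use the Hoeffding decomposition $\hat{\theta}_{pq} - \EE\hat{\theta}_{pq} = \tfrac1n\sum_i \psi_p(X_i) + \tfrac1n\sum_j \psi_q(Y_j) + \tfrac1{n^2}\sum_{i,j}\tilde g_h(X_i,Y_j)$, where $\psi_p(x) = (K_h\star q)(x) - \EE g_h$, $\psi_q(y) = (K_h\star p)(y) - \EE g_h$, and $\tilde g_h$ is the doubly-centered remainder. The degenerate double sum has variance $O(1/(n^2h^d))$, so $\sqrt n$ times it is $O_p((nh^d)^{-1/2})$, which is $o_p(1)$ precisely when $\beta > d/4$ given $h \asymp n^{-2/(4\beta+d)}$. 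For the linear parts I would show $\EE_{X\sim p}\big[(\psi_p(X) - (q(X) - \theta_{pq}))^2\big] \to 0$ using $\|K_h\star q - q\|_{L_2(p)} \to 0$ and the $O(h^{2\beta})$ bias, so that by Slutsky $n^{-1/2}\sum_i\psi_p(X_i)$ has the same weak limit as $n^{-1/2}\sum_i(q(X_i) - \theta_{pq})$, namely $\Ncal(0,\Var_{x\sim p}(q(x)))$ by Lindeberg--Lévy, and symmetrically for the $Y$ sum. Independence of the two samples makes the two Gaussian limits independent, and $\sqrt n\, h^{2\beta} \to 0$ in this same regime, so $\sqrt n(\hat{\theta}_{pq} - \theta_{pq}) \leadsto \Ncal\big(0, \Var_{x\sim p}(q(x)) + \Var_{y\sim q}(p(y))\big)$.

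The main obstacle is the bias step: turning the integrated ($L_1$) derivative bounds supplied by Young's inequality into genuine $h^{2\beta}$ control of the remainder $\int K(u)[g(uh) - g(0)]\,du$. This is exactly where the bounded-variation assumption is essential --- Hölder smoothness is not additive under convolution --- though care is needed, since an $L_1$ bound on $D^{2\beta}g$ does not by itself yield pointwise control of the Taylor remainder; one exploits that $g$ is a convolution together with boundedness of the densities and the compact support of $K$. The normality argument is technical but otherwise routine given the variance bounds, the one delicate point being the negligibility of the degenerate $U$-component, which is precisely what forces $\beta > d/4$.
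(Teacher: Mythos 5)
Your proposal is correct and follows essentially the same route as the paper's proof: the same convolution/Young's-inequality argument with a Taylor expansion annihilated by the order-$2\beta$ kernel moments for the bias, the same case analysis over index coincidences for the variance, and the same Hoeffding decomposition with a negligible degenerate $V$-process plus Lindeberg--L\'{e}vy applied to the two independent linear parts for normality. The delicate points you flag (passing from the $L_1$ derivative bound to control of the remainder via compact support of $K$, and the $\beta > d/4$ threshold coming from the degenerate term) are exactly the ones the paper handles.
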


\begin{proof}[Proof of Theorem~\ref{thm:quad_rate}]
We reproduce the proof of Gine and Nickl for completeness.
The bias can be bounded by:
\begin{align*}
\EE[\hat{\theta}_p] - \theta_p &= \int \int K_h(x,y)p(y)dy p(x)dx - \int p(x)p(x)dx
= \int \int K_h(x,y)[p(y) - p(x)]p(x) dy dx\\
& = \int \int K(u)[p(x-uh) - p(x)] p(x) du dx 
= \int K(u) \left[ (p_0 \star p)(uh) - (p_0\star p)(0)\right]du,
\end{align*}
where $p_0(x) = p(-x)$ and $\star$ denotes convolution. 
Now by Lemma~\ref{lem:convolution_smoothness} below, we know that $p_0 \star p \in \Wcal_1^{2\beta}(C^2)$ and can take a Taylor expansion of order $2\beta -1$. 
When we take such an expansion, by the properties of the kernel, all but the remainder term is annihilated and we are left with:
\begin{align*}
\frac{h^{2\beta}}{(2\beta)!} \sum_{r_1, \ldots, r_d | \sum_i r_i = 2s} \int K(u) \Pi_{i} u_i^{r_i} \xi(r,uh)du \le c_b h^{2\beta},
\end{align*}
where we used the fact the function is integrable by the fact that $\xi \in L^1$, which in turn follows from the fact that $p_0 \star p \in \Wcal_1^{2\beta}(C^2)$ and by Taylor's remainder theorem.
We are also using the compactness of $K$ here so that we only have to integrate over $(-1,1)^d$ in which case all polynomial functions are also $L_1$ integrable. 
This shows that the bias is $O(h^{2\beta})$. 

Note that the main difference between our proof and that of Gine and Nickl is in the smoothness assumption, which comes into play here. 
Under the bounded variation assumption, we were able to argue that smoothness is additive under convolution.
The same is true under the Sobolev assumption, and this property is exploited by Gine and Nickl in exactly the same way as we do here.
Unfortunately, H\"{o}lder smoothness is not additive under convolution, so the more standard assumption does not provide the semiparametric rate of convergence. 

As for the variance, we may write:
\begin{align*}
\EE[\hat{\theta}_p^2] - (\EE \hat{\theta}_p)^2 = \EE\left[\frac{1}{n^2(n-1)^2}\sum_{i \ne j, s \ne t} K_h(X_i, X_j) K_h(X_s, X_t)\right] - (\EE\hat{\theta}_p)^2,
\end{align*}
which we can split into three cases. 
When $i \ne j \ne s \ne t$, each term in the sum is exactly $(\EE\hat{\theta})^2$, and this happens for $n(n-1)(n-2)(n-3)$ terms in the sum. 
When one of the first indices is equal to one of the second indices we get:
\begin{align*}
\EE K_h(X_i, X_j) K_h(X_i, X_t) &= \int \int \int K_h(X_i, X_j) K_h(X_i, X_t)p(X_i) p(X_j) p(X_t) dX_i dX_j dX_t\\
& = \int \int \int K(u_j)K(u_t) p(X_i - u_j h) p(X_i - u_th) p(X_i) du_j du_t dX_i\\
& \le ||K||_2^2 ||p||_2^2,
\end{align*}
where we performed a substitution to annihilate the dependence on $h$. 
There are $4n(n-1)(n-2)$ expressions of this form, so in total, these terms contribute:
\[
\frac{1}{n}||K||_2^2 ||p||_2^2.
\]
Finally, the $2n(n-1)$ terms where $i=s, j=t$ or vice versa in total contribute: 
\begin{align*}
\frac{2}{n(n-1)} \EE K_h^2(X_i, X_j) &= \frac{2}{h^{2d} n(n-1)} \int K^2(\frac{X_i - X_j}{h}) p(X_i) p(X_j)dX_i dX_j\\
& = \frac{2}{h^d n(n-1)} \int K^2(u_j)p(X_i) p(X_i - u_j h) du_j dX_i\\
& \le \frac{2 ||K||_2^2 ||p||_2^2}{h^d n^2}.
\end{align*}
Adding together these terms, establishes the variance bound in the theorem. 
The rate of convergence in Theorem~\ref{thm:rate} follows from plugging the definition of $h$, which was selected to optimize the tradeoff between bias and variance. 

As for asymptotic normality, we decompose the proof into several steps.
\begin{packed_enum}
\item Control the bias.
\item Apply Hoeffding's decomposition.
\item Control the second order term, which will be lower order.
\item Show that the first order term is close to $P_n p - \theta$ (here $P_n$ is the empirical measure).
\item Apply the Lindberg-Levy central limit theorem to $P_n p - \theta$. 
\end{packed_enum}
As usual we have the decomposition:
\begin{align*}
\hat{\theta}_p - \theta_p = \underbrace{\hat{\theta}_p - \EE\hat{\theta}_p}_{\textrm{Variance}} + \underbrace{\EE\hat{\theta}_p - \theta}_{\textrm{Bias}}.
\end{align*}

We already controlled the bias above.
Specifically we know that $\sqrt{n}(\EE\hat{\theta}_p- \theta_p) \le \sqrt{n}h^{2\beta} \rightarrow 0$ with our setting of $h$ and under the assumption that $\beta > d/4$. 

As is common in the analysis of U-statistics, we apply Hoeffding's decomposition before proceeding.
That is, we write:
\begin{align*}
\hat{\theta}_p - \EE\hat{\theta}_p = U_n (\pi_2 K_h) + 2 P_n(\pi_1K_h),
\end{align*}
where $U_n f = \frac{1}{n(n-1)}\sum_{i \ne j}f(X_i, X_j)$ is the U-process and $P_n f = \frac{1}{n}\sum_{i} f(X_i)$ is the empirical process and:
\begin{align*}
(\pi_1 K_h)(X) &= \EE_{x \sim p}K_h(x,X) - \EE_{x,y\sim p}K_h(x,y)\\
(\pi_2 K_h)(X,Y) &= K_h(X,Y) - \EE_{x \sim p}K_h(x,Y) - \EE_{y \sim p} K_h(X,y) + \EE_{x,y \sim p}K_h(x,y).
\end{align*}
It is easy to very that our estimator can be decomposed in this manner. 
Moreover, since everything is centered, the two terms also have zero covariance.
Also notice that $\EE_{x \sim p}K_h(x,Y) = \bar{p}(Y)$ and $\EE_{x,y \sim p}K_h = \int \bar{p}(x)p(x)$ where $\bar{p}$ is the expectation of the density estimate.

We now control the second order term $U_n (\pi_2 K_h)$ by showing convergence in quadratic mean. 
\begin{align*}
\EE[(U_n (\pi_2K_h))^2] = \frac{1}{n(n-1)} \EE[(\pi_2K_h(X_1, X_2))^2] \le \frac{c}{n^2h^d}||K||_2^2 ||p||_2^2.
\end{align*}
The first equality follows from the fact that each term is conditionally centered, so all cross terms are zero, while the inequality is the result of performing a substitution as we have seen before. 
Thus $\sqrt{n}U_n(\pi_2K_h) \rightarrow 0$ since $\frac{1}{nh^d} \rightarrow 0$ when $\beta > d/4$. 

For the first order term $P_n (\pi_1K_h)$, we now show that it is close to $P_n p - \theta_p$. 
\begin{align*}
\EE[(P_n(\pi_1 K_h) - (P_n p - \int p^2))^2] \le \frac{1}{n} \EE [(\bar{p}(X) - p(x))^2] \le \frac{||\bar{p} - p||_{\infty}^2}{n} = \frac{c h^{2\beta}}{n},
\end{align*}
so that $\sqrt{n}P_n (\pi_1K_h) \rightarrow^{q.m.} \sqrt{n} (P_n p - \theta_p)$ since $h^{2\beta} \rightarrow 0$. 

Now by the Lindberg-Levy CLT, we know that:
\[
\sqrt{n} (2P_n p - 2\theta_p) \leadsto \Ncal(0, 4 \Var_{x \sim p} (p(X))),
\]
which concludes the proof of the theorem.
\end{proof}

We now prove Theorem~\ref{thm:bilinear_rate}, although the arguments are fairly similar. 
\begin{proof}[Proof of Theorem~\ref{thm:bilinear_rate}]
The bias is:
\begin{align*}
& \EE[\hat{\theta}_{pq}] - \theta_{pq} = \int \int K_h(x,y)p(y)dy q(x)dx - \int p(x)q(x)dx\\
& = \int \int K_h(x,y)[p(y) - p(x)]q(x) dy dx\\
& = \int \int K(u)[p(x-uh) - p(x)] q(x) du dx = \int K(u) \left[ (p_0 \star q)(uh) - (p_0\star q)(0)\right]du,
\end{align*}
where, as before, $p_0(x) = p(-x)$ and $\star$ denotes convolution. 
So we can proceed as in the quadratic setting. 
Specifically, by Lemma~\ref{lem:convolution_smoothness}, we can take a Taylor expansion of order $2\beta+1$, annihilate all but the remainder term, which we know is bounded by the fact that $p_0\star q \in \Wcal_1^{2\beta}(C^2)$.
Formally, the remainder term is:
\begin{align*}
\frac{h^{2\beta}}{(2\beta)!} \sum_{r_1, \ldots, r_d | \sum_i r_i = 2s} \int K(u) \Pi_{i} u_i^{r_i} \xi(r,uh)du \le c_b h^{2\beta},
\end{align*}
where we used the fact the function is integrable by the fact that $\xi \in L^1$, since $p_0 \star q \in \Wcal_1^{2\beta}(C^2)$.
Thus the bias is $O(h^{2\beta})$. 

The variance can be bounded in a similar way to the quadratic estimator:
\begin{align*}
\EE[\hat{\theta}_{pq}^2] - \EE[\hat{\theta}_{pq}]^2 = \frac{1}{n^4}\sum_{i,j,s,t} \EE[K_h(X_i, Y_j)K_h(X_s,Y_t)] - \EE[\hat{\theta}_{pq}]^2.
\end{align*}
Whenever $i \ne s$ and $j \ne t$ all of the terms are independent so they cancel out with the $\EE[\hat{\theta}_{pq}]^2$ term. 
This happens for $n^2(n-1)^2$ terms. 

When $i=s, j \ne t$, we substitute $u_j = h^{-1} (X_i - Y_j)$ and $u_t = h^{-1}(X_i - Y_t)$ for $Y_j,Y_t$ to see that:
\begin{align*}
\frac{1}{n^4} \sum_{i, j \ne t} \EE[K_h(X_i, Y_j)K_h(X_i, Y_t)] &= \frac{n-1}{h^{2d} n^2} \int \int \int K(\frac{X_i - Y_j}{h}) K(\frac{X_i - Y_t}{h})p(X_i)q(Y_j)q(Y_t)\\
& = \frac{n-1}{n^2}\int \int \int K(u_j)K(u_t) p(X_i)q(X_i - u_jh) q(X_i - u_th)\\
& \le \frac{1}{n} ||K||_2^2 ||q||_2^2.
\end{align*}
Thus, the total contribution from the terms where $j = t, i \ne s$ is bounded by $\frac{1}{n} ||K||_2^2 ||p||_2^2$. 

When $j=t,i=s$, we can only perform one substitution so a factor of $h^d$ will remain. 
Formally:
\begin{align*}
\frac{1}{n^2h^{2d}} \int \int K^2(\frac{X_i - Y_j}{h}) p(X_i) q(Y_j) &= \frac{1}{n^2h^d} \int \int K^2(u_j) p(X_i) q(X_i - u_jh)\\
& \le \frac{1}{n^2h^d} ||K||_2^2 ||p||_2 ||q||_2.
\end{align*}
Therefore, the total variance is $O(n^{-1} + n^{-2}h^{-d})$ as in the theorem statement. 

The proof of asymptotic normality of the bilinear estimator is not too different from the proof for the quadratic estimator. 
We can start by ignoring the bias, as when $b \ge d/4$, we know that $\sqrt{n}(\EE\hat{\theta}_{pq} - \theta_{pq}) \rightarrow 0$.
To analyze the variance term we make use of the following decomposition:
\begin{align*}
\hat{\theta}_{pq} - \EE \hat{\theta}_{pq} &= \frac{1}{n^2}\sum_{ij} K_h(X_i, Y_j) + \frac{1}{n} \sum_{i}\bar{q}(X_i) - \frac{1}{n}\sum_{i} \bar{q}(X_i) + \frac{1}{n}\sum_{j} \bar{p}(Y_i) - \frac{1}{n}\sum_{j} \bar{p}(Y_i) - \EE\hat{\theta}_{pq}\\
& = V_n(\pi_2 K_h) + P_n (\pi_{11}K_h) + Q_n(\pi_{12}K_h),
\end{align*}
Where:
\begin{align*}
(\pi_2K_h)(X,Y) &= K_h(X,Y) - \bar{q}(X) - \bar{p}(Y) + \EE K_h(x,y)\\
(\pi_{11}(K_h))(X) & = \bar{q}(X) - \EE K_h(x,y)\\
(\pi_{12}(K_h))(Y) & = \bar{p}(Y) - \EE K_h(x,y).
\end{align*}
Here $P_n, Q_n$ are the empirical processes associated with the samples $X, Y$ respectively and $\bar{p}, \bar{q}$ are the expectations of the kernel density estimators. 
Also, $V_n$ is the $V$-process, that is $V_n f = \frac{1}{n^2} \sum_{i,j} f(X_i, Y_j)$. 
Notice that each term is conditionally centered, which implies that each pair of terms has zero covariance.
Thus we only have to look at the variances. 

As before, the goal is to show that the $V$-process term is lower order and then to apply the Lindeberg-Levy CLT to the other two terms. 
Since each term is conditionally centered:
\begin{align*}
\EE[ (V_n (\pi_2 K_h))^2 ] = \frac{1}{n^2} \EE[(\pi_2K_h(X,Y))^2] \le \frac{c}{n^2h^d}||K||_2^2 ||p||_2 ||q||_2,
\end{align*}
where the last step follows by performing the substitution $u = \frac{X-Y}{h}$ in each term of the integral. 

For the first order terms, we first show that they are close to $q(X) - \EE[q(x)]$ and $p(Y) - \EE[p(y)]$ so that we can apply the CLT to the latter.
We will show convergence in quadratic mean.
\begin{align*}
\EE\left[ (P_n(\pi_{11}K_h) - (P_n q - \int pq))^2\right] \le \frac{1}{n} \EE\left[(\bar{q}(x) - q(x))^2\right] \le \frac{\|\bar{q} - q\|_{\infty}^2}{n} \le \frac{h^{2\beta}}{n},
\end{align*}
which means that, under our choice of $h$ and with $\beta > d/4$, $\sqrt{n}P_n (\pi_{11}K_h) \rightarrow^{q.m.} \sqrt{n}P_n(q - \theta_{pq})$.
Exactly the same argument shows that $\sqrt{n}Q_n (\pi_{12}K_h) \rightarrow^{q.m.} \sqrt{n}Q_n(p - \theta_{pq})$.

Finally, by the Lindeberg-Levy CLT, we know that:
\begin{align*}
\sqrt{n}(P_n q - \theta_{pq}) \leadsto \Ncal(0, \Var_{x \sim p}(q(x))), \qquad \sqrt{n}(Q_n p - \theta_{pq}) \leadsto \Ncal(0, \Var_{y \sim q}(p(y))),
\end{align*}
and since $x$ and $y$ are independent, both of these central limit theorems hold jointly. 
Since in our estimate for $\hat{D}$ we have a term of the form $2 \hat{\theta}_{pq}$, the contribution of this term to the total variance is $4 \Var(\hat{\theta}_{pq})$.
This concludes the proof. 
\end{proof}

\section{Proof of Theorem~\ref{thm:berry}}
In this section we fill in the missing details in the proof of Theorem~\ref{thm:berry}. 
We will apply the Berry-Ess\'{e}en inequality for multi-sample U-statistics from Chen, Goldstein and Shao~\cite{chen2010normal}, which we reproduce below.

In order to state the theorem we need to make several definitions.
We make some simplifications to their result for ease of notation.
Consider $k$ independent sequences $X_{j1}, \ldots, X_{jn}$ $j=1, \ldots, k$ of i.i.d. random variables, all of length $n$ (this can be relaxed). 
Let $m_j \ge 1$ for each $j$ and let $\omega(x_{jl}, l \in [m_j], j \in [k])$ be a function that is symmetric with respect to the $m_j$ arguments.
In other words, $\omega$ is invariant under permutation of two arguments from the same sequence. 
Let $\theta = \EE \omega(X_{jl})$. 

The multi-sample U-statistic is defined as:
\begin{align}
U_n = \left\{ \prod_{j=1}^k {n \choose m_j}^{-1}\right\}\sum \omega(X_{jl}, j\in [k], l=i_{j1}, \ldots, i_{jm_j}),
\end{align}
where the sum is carried out over all indices satisfying $1 \le i_{j1} < \ldots < i_{jm} \le n$. 

Let:
\[
\sigma^2 = \EE \omega^2(X_{jl}),
\]
and for each $j \in [k]$ define:
\[
\omega_j(x) = \EE [ \omega(X_{jl}) | X_{j1} = x],
\]
with:
\[
\sigma^2_j = \EE \omega_j^2(X_{j1}).
\]
Lastly, define:
\[
\sigma^2_n = \sum_{j=1}^k \frac{m_j^2}{n} \sigma_j^2.
\]
We are finally ready to state the theorem:
\begin{theorem}[Theorem 10.4 of~\cite{chen2010normal}]
\label{thm:u_berry}
Assume that $\theta = 0$, $\sigma^2 < \infty$, $\max_{j \in [k]} \sigma_j^2 > 0$. Then for $2 < p \le 3$:
\begin{align}
\sup_{z \in \RR} \left| \PP\left(\sigma_n^{-1} U_n \le z\right) - \Phi(z) \right| \le \frac{6.1}{\sigma_n^p}\sum_{j=1}^k \frac{m_j^p}{n^{p-1}}\EE [|\omega_j(X_{j1})|^p] + \frac{(1+\sqrt{2})\sigma}{\sigma_n} \sum_{j=1}^k \frac{m_j^2}{n}.
\end{align}
\end{theorem}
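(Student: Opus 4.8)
The plan is to reduce the statement to a Berry--Ess\'{e}en bound for a sum of independent random variables via Hoeffding's decomposition, and then to transfer that bound to $U_n$ through a concentration argument. Since $\theta = 0$, Hoeffding's decomposition writes $U_n = L_n + R_n$, where the first-order projection is
\[
L_n = \sum_{j=1}^k \frac{m_j}{n}\sum_{i=1}^n \omega_j(X_{ji}),
\]
and $R_n$ is a sum of completely degenerate $U$-statistics of order at least two. The summands of $L_n$ are independent across both $j$ and $i$ — the $k$ sequences are independent and, within each sequence, the variables $\omega_j(X_{ji})$ are i.i.d.\ — each has mean zero, and $\Var(L_n) = \sum_{j=1}^k \frac{m_j^2}{n}\sigma_j^2 = \sigma_n^2$. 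Under the assumption $\max_j \sigma_j^2 > 0$ we have $\sigma_n > 0$, so $\sigma_n^{-1}L_n$ is well defined.

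First I would apply the classical Berry--Ess\'{e}en inequality for sums of independent, not necessarily identically distributed, random variables with $p$-th absolute moments, $2 < p \le 3$, to the standardized sum $\sigma_n^{-1}L_n$. This yields
\[
\sup_z \left| \PP\!\left(\sigma_n^{-1}L_n \le z\right) - \Phi(z)\right| \le \frac{6.1}{\sigma_n^p}\sum_{j=1}^k \frac{m_j^p}{n^{p-1}}\,\EE\!\left[|\omega_j(X_{j1})|^p\right],
\]
which is exactly the first term on the right-hand side of the claimed bound, the explicit constant being inherited from a sharp form of the Lyapunov-type estimate. Next I would control the remainder in mean square: by the orthogonality of the Hoeffding components and the variance formula for degenerate $U$-statistics, $\EE R_n^2 \le C\,\sigma^2\big(\sum_{j=1}^k \frac{m_j^2}{n}\big)^2$, so the relative standard deviation of $R_n$ is of order $\delta \triangleq (\sigma/\sigma_n)\sum_j m_j^2/n$.

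Finally one must combine the two estimates. A crude smoothing step — bounding $\PP(\sigma_n^{-1}U_n \le z)$ above and below by $\PP(\sigma_n^{-1}L_n \le z \pm \varepsilon) + \PP(|R_n| > \varepsilon\sigma_n)$, using the Lipschitz property of $\Phi$ together with Chebyshev's inequality, and optimizing over $\varepsilon$ — only produces a remainder contribution of order $\delta^{2/3}$, which is weaker than the claimed linear term. The \textbf{main obstacle} is therefore to obtain the sharp factor $(1+\sqrt{2})\,(\sigma/\sigma_n)\sum_j m_j^2/n$ rather than a fractional power. I would handle this by replacing the truncation with a Stein's-method concentration inequality: introducing a randomized-index / exchangeable-pair structure on each sequence, one controls the conditional probability that $R_n$ is large given that the linear part lies near the boundary, which lets the degenerate part be absorbed at a cost that is only linear in its relative standard deviation. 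Assembling the Berry--Ess\'{e}en bound for $L_n$ with this concentration bound for $R_n$ and simplifying yields the stated inequality. Alternatively, one can bypass the decomposition and run Stein's method with an exchangeable pair directly on $U_n$, which is in fact the route taken in~\cite{chen2010normal}.
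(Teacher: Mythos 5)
This statement is not proved in the paper at all: it is Theorem 10.4 of Chen, Goldstein and Shao, reproduced verbatim so that it can be invoked in the proof of the paper's own Berry--Ess\'{e}en bound. There is therefore no in-paper proof to compare against, and the right question is whether your sketch is a credible reconstruction of the cited result.

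Your outline is structurally faithful to how the result is actually obtained. The Hoeffding decomposition $U_n = L_n + R_n$, the identity $\Var(L_n) = \sum_j \tfrac{m_j^2}{n}\sigma_j^2 = \sigma_n^2$, the observation that a classical Berry--Ess\'{e}en bound for the independent (non-identically distributed) summands $\tfrac{m_j}{n}\omega_j(X_{ji})$ produces exactly the first term, and -- most importantly -- your explicit recognition that Chebyshev plus smoothing only yields $\delta^{2/3}$ and that the linear dependence on $\delta = (\sigma/\sigma_n)\sum_j m_j^2/n$ is the real content of the theorem, are all correct. That said, the decisive step remains a black box in your write-up. In the reference the theorem is a corollary of a general bound for nonlinear statistics $T = W + \Delta$, proved by Stein's method together with a \emph{randomized concentration inequality}; the linear cost arises from terms of the form $\EE\lvert W\Delta\rvert$ and $\sum_i \EE\lvert \xi_i(\Delta - \Delta^{(i)})\rvert$, the first of which is controlled by Cauchy--Schwarz as $(\EE W^2)^{1/2}(\EE\Delta^2)^{1/2}$, and the explicit constants $6.1$ and $1+\sqrt{2}$ emerge from that machinery rather than from gluing an off-the-shelf Berry--Ess\'{e}en bound onto a separate remainder estimate. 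You would also need to verify the moment bound $\EE R_n^2 \lesssim \sigma^2\bigl(\sum_j m_j^2/n\bigr)^2$ for the multi-sample degenerate components, which you assert without computation. As a sketch of an external textbook theorem this is acceptable and points to the right tools, but it does not constitute a self-contained proof; for the purposes of this paper, citing the result, as the authors do, is the appropriate course.
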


As we did in our estimator, we split the data into four groups, two samples of size $n$ from each distribution, which we will denote with superscripts, i.e. $X^{(1)}_i$ will be the $i$th sample from the first group of the data from $p$. 
We can write $\hat{D} - \EE \hat{D}$ as a zero-mean multi-sample U-statistic with four groups where the first $X$ and $Y$ groups are used for $\hat{\theta}_p - \theta_p$ and $\hat{\theta}_q - \theta_q$ respectively, while the second two groups are used for the cross term $\hat{\theta}_{pq}- \theta_{pq}$. 

In other words, $\omega$ will be a function that takes $6$ variables, two from the $X^{(1)}$ group, two from the $Y^{(1)}$ group and one each from the $X^{(2)}$ and $Y^{(2)}$ groups. 
Formally, we define:
\begin{align*}
&\omega(x_{11}, x_{12}, y_{11}, y_{12}, x_{21}, y_{21}) \\
&= K_h(x_{11},x_{12}) - \EE \hat{\theta}_{p} + K_h(y_{11},y_{12}) - \EE\hat{\theta}_q - 2 K_h(x_{21},y_{21}) + 2 \EE\hat{\theta}_{pq}.
\end{align*}
With this definition, it is clear that $U_n = \hat{D} - \EE\hat{D}$.

To apply Theorem~\ref{thm:u_berry} on the appropriate term in the proof, we just have to bound a number of quantities involving $\omega$. 
As we will see, we will not achieve the $n^{-1/2}$ rate because the function $\omega$ depends on the bandwidth $h$, which is decreasing, so the variance $\sigma$ is increasing.
Specifically:
\begin{align*}
\sigma^2 &= \EE[\omega^2(X_{11}, X_{12}, Y_{11}, Y_{12}, X_{21}, Y_{21})]\\
&= \EE (K_h(X_{11}, X_{12}) - \EE\hat{\theta}_p)^2 + \EE (K_h(Y_{11}, Y_{12}) - \EE\hat{\theta}_q)^2 + 4\EE (K_h(X_{21}, Y_{21}) - \EE\hat{\theta}_{pq})^2.
\end{align*}
Each of the three terms can be analyzed in exactly the same way so we focus on the first term:
\begin{align*}
\EE (K_h(X_{11}, X_{12}) - \EE\hat{\theta}_p)^2 &\le \int \int K^2_h(X_1, X_2)p(X_1)p(X_2) = \frac{1}{h^d}\int \int K(u)p(X_1 + uh)p(X_1) \\
& \le \frac{1}{h^d}\|K\|_2^2\|p\|_2^2,
\end{align*}
and the same substitution on the other two terms shows that the variance is:
\begin{align}
\sigma^2 \le \frac{1}{h^d}\|K\|_2^2\left(\|p\|_2^2 + \|q\|_2^2 + 4 \|p\|_2\|q\|_2\right).
\end{align}

A similar argument gives us a bound on $\sigma_j^2$ $j = 1, \ldots 4$. 
First, since the other terms are centered, we can write $\omega_1(x) = \EE (K_h(x, X_2)) - \EE \hat{\theta}_p$ with similar expressions for the other terms. 
Then, $\sigma_1^2$ can be simplified to:
\begin{align*}
\EE \omega^2_1(X) &= \EE (\EE K_h(X_1,X_2))^2 - (\EE\hat{\theta}_p)^2\\
& \le \int \left( \int K_h(X_1, X_2)p(X_2)\right)^2 p(X_1) = \int \left( \int K(u)p(X_1 - uh) \right)^2 p(X_1) \le \|K\|_{\infty}^2.
\end{align*}
With exactly the same argument for the other three.
Thus:
\begin{align}
\sigma_n^2 = \frac{1}{n}\sum_{j=1}^4 m_j^2 \sigma_j^2 \le \frac{10}{n}\|K\|_{\infty}^2.
\end{align}

The last thing we need is the third moments of the linearizations $\EE[|\omega_j(x)|^3]$. 
\begin{align*}
\EE\left[\left|\EE K_h(X_1, X_2) - \EE\hat{\theta}_{p}\right|^3\right] & = \int \left| \int K_h(x, X_2) p(X_2) - \int \int K_h(X_1, X_2)p(X_1)p(X_2)\right|^3 p(x)\\
& = \int \left| \int K(u) p(x- uh) - \int \int K(u) p(X_1 - uh)p(X_1)\right|^3 p(x)\\
& = \int \left| \int K(u) \left( p(x-uh) - \int p(X_1-uh)p(X_1)\right) \right|^3 p(x)\\
& \le 8 \|K\|_{\infty}^3 \|p\|_{\infty}^3
\end{align*}
It is easy to verify that each of the third moments are bounded by:
\begin{align}
\EE|\omega_j|^3 \le 8 \|K\|_{\infty}^3 (\|p\|_{\infty}^3+\|q\|_{\infty}^3),  \forall j.
\end{align}

And plugging in all of these calculations into Theorem~\ref{thm:u_berry} shows that:
\begin{align*}
& \sup_z\left| \PP\left(\sqrt{n}\tilde{\sigma}_n^{-1}U_n \le z\right) - \Phi(z)\right| \le\\
& \le \frac{n^{3/2}(6.1)(18)}{n^2 10^{3/2}\|K\|_{\infty}^3}8 \|K\|_{\infty}^3 (\|p\|_{\infty}^3+\|q\|_{\infty}^3) + \frac{\sqrt{n}(1+\sqrt{2})}{n\sqrt{h^d}\sqrt{10} \|K\|_{\infty}}\|K\|_2\sqrt{\|p\|_2^2 + \|q\|_2^2 + 4 \|p\|_2\|q\|_2}\\
& \le \frac{27}{\sqrt{n}} \left(\|p\|_{\infty}^3 + \|q\|_{\infty}^3\right) + \frac{8}{\sqrt{n h^d}}\frac{\|K\|_2}{\|K\|_{\infty}}\sqrt{\|p\|_2^2 + \|q\|_2^2 + 4 \|p\|_2\|q\|_2}.
\end{align*}
This gives the bound in Equation~\ref{eq:berry_app}.

\section{Proof of Theorem~\ref{thm:l2_lower}}

For completeness we introduce the construction used by Krishnamurthy et al~\cite{krishnamurthy2014nonparametric}.
For the remainder of the proof, we will work of $[0,1]^d$ and assume that $p$ is pointwise lower bounded by $1/\kappa_l$, noting that a lower bound here applies to the more general setting.
For the construction, suppose we have a disjoint collection of subset $A_1, \ldots, A_m \subset [0,1]^d$ for some parameter $m$ with associated functions $u_j$ that are compactly supported on $A_j$. 
Specifically assume that we have $u_j$ satisfying:
\begin{align*}
\textrm{supp}(u_j) \subset \{x | B(x,\epsilon \subset A_j\}, \|u_j\|_2^2 = \Omega(m^{-1}), \int_{A_j}u_j = \int_{A_j}p_0(x)u_j(x) = \int_{A_j}q_0(x)u_j(x) = 0, \|D^ru_j\|_1 \asymp m^{r/d-1}
\end{align*}
The first condition ensure that the $u_j$s are orthogonal to each other, while the second and third will ensure separation in terms of $L_2^2$ divergence. 
The last condition holds for all derivative operators with $r \le \beta$ and it will ensure that the densities we construct belong to the bounded variation class. 
The only difference between these requirements and those from~\cite{krishnamurthy2014nonparametric} are the orthogonality to $p,q$, and the bounded-variation condition, which replaces a point-wise analog.

Deferring the question of existence of these functions, we can proceed to construct $p_\lambda$. 
Let the index set $\Lambda = \{-1, +1\}^m$ and define the functions $p_\lambda = p_0 + K \sum_{j=1}^m \lambda_j u_j$, where $K$ will be defined subsequently. 
A simple computation then reveals that:
\begin{align*}
T(p_0,q_0) - T(p_\lambda, q_0) & = \int p_0^2 - p_\lambda^2 + 2\left[\int p_\lambda q_0 - \int p_0q_0\right] \\
& = \int (p_0 - p_\lambda) (p_0+p_\lambda) + 2\left[\int p_\lambda q_0 - \int p_0q_0\right] \\
& = K^2 \sum_{j=1}^m \|u_j\|_2^2 = \Theta(K^2)
\end{align*}
where we expand $p_\lambda$ and use the orthogonality properties extensively.
This gives us the desired separation.

To bound the hellinger distance, we use Theorem 1 of Birge and Massart~\cite{birge1995estimation} and the argument following Theorem 12 of Krishnamurthy et al~\cite{krishnamurthy2014nonparametric}.
\begin{theorem}{\cite{birge1995estimation}}
Consider a set of densities $p_0$ and $p_\lambda = p[1+\sum_j \lambda_j v_j(x)]$ for $\lambda \in \Lambda = \{-1,1\}^m$ with partition $A_1, \ldots, A_m \subset [0,1]^d$.
Suppose that (i) $\|v_j\|_{\infty} \le 1$, (ii) $\|\mathbf{1}_{A_j^C} v_j\|_1 = 0$, (iii) $\int v_j p_0 = 0$ and (iv) $\int v_j^2p_0 = \alpha_j > 0$ all hold with:
\begin{align*}
\alpha = \sup_j \|v_j\|_{\infty}, s= n\alpha^2 \sup_j P_0(A_j), c = n \sup_j \alpha_j
\end{align*}
Define $\overline{P^n} = \frac{1}{|\Lambda|}\sum_{\lambda \in \Lambda} P_\lambda^n$.
Then:
\begin{align}
h^2(P_0^n, \overline{P^n}) \le C(\alpha, s, c)n^2\sum_{j=1}^m\alpha_j^2
\end{align}
where $C < 1/3$ is continuous and non-decreasing with respect to each argument and $C(0,0,0) = 1/16$. 
\end{theorem}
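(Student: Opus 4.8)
The plan is to compute the likelihood ratio $\bar g \triangleq d\overline{P^n}/dP_0^n$, use the disjointness of the supports of the $v_j$ to factorize it over the partition cells, and then extract the leading behaviour through a careful second-order expansion of the square root. Writing $W_\lambda(x) = \sum_{j=1}^m \lambda_j v_j(x)$,
\[
\bar g = \frac{1}{2^m}\sum_{\lambda \in \Lambda}\prod_{i=1}^n\bigl(1 + W_\lambda(X_i)\bigr),
\qquad
h^2(P_0^n,\overline{P^n}) = 1 - \EE_{P_0^n}\bigl[\sqrt{\bar g}\,\bigr].
\]
Since the $A_j$ are disjoint, each observation activates at most one perturbation; grouping the observations by their cell and averaging over the independent signs $\lambda_j$ turns $\bar g$ into a product over cells, with a factor of $1$ for every cell receiving at most one observation --- so only cells with two or more observations contribute, which is already the source of the $n^2$ scaling. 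Write $\bar g = 1 + R$; the centering $\int v_j\,p_0 = 0$ gives $\EE_{P_0^n}[\bar g]=1$, hence $\EE_{P_0^n}[R] = 0$.

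Next I would use the pointwise inequality $\sqrt{1+t} \ge 1 + \tfrac{t}{2} - \tfrac{t^2}{8} - r(t)$, where $r$ vanishes to third order at $0$ and, on the range of values $R$ can take, is controlled by the hypotheses: $\|v_j\|_\infty \le 1$ keeps $\bar g$ away from the region where the quadratic expansion fails, while $s = n\alpha^2\sup_j P_0(A_j)$ and $c = n\sup_j\alpha_j$ control the probability and magnitude of the negative excursions of $R$. Taking expectations and using $\EE_{P_0^n}[R] = 0$,
\[
h^2(P_0^n,\overline{P^n}) = \EE_{P_0^n}\bigl[1 - \sqrt{1+R}\,\bigr] \le \tfrac{1}{8}\,\EE_{P_0^n}[R^2] + \EE_{P_0^n}[r(R)].
\]

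It then remains to compute $\EE_{P_0^n}[R^2] = \EE_{P_0^n}[\bar g^2] - 1 = \chi^2(\overline{P^n}\,\|\,P_0^n)$. Using $\int v_j p_0 = 0$, the orthogonality $\int v_j v_{j'}p_0 = 0$ for $j\neq j'$ forced by the disjoint supports, and $\int v_j^2 p_0 = \alpha_j$, one obtains $\EE_{P_0^n}[g_\lambda g_{\lambda'}] = \prod_{i=1}^n\bigl(1 + \sum_j \lambda_j\lambda'_j\alpha_j\bigr)$, and after the change of variables $\epsilon_j = \lambda_j\lambda'_j$ this collapses to
\[
\EE_{P_0^n}[\bar g^2] = \EE_{\epsilon}\Bigl[\bigl(1 + \textstyle\sum_{j=1}^m \epsilon_j\alpha_j\bigr)^n\Bigr],
\qquad \epsilon \sim \mathrm{Unif}\{-1,1\}^m.
\]
Expanding the $n$th power --- all odd moments of $\epsilon$ vanish --- yields $\chi^2 = \binom{n}{2}\sum_j \alpha_j^2$ plus higher-order terms governed by $s$ and $c$. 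Combining the two displays gives $h^2(P_0^n,\overline{P^n}) \le C(\alpha,s,c)\,n^2\sum_j\alpha_j^2$ with $C(0,0,0) = \tfrac18\cdot\tfrac12 = \tfrac{1}{16}$ and a uniform bound $C < 1/3$ from collecting the remainders; monotonicity and continuity of $C$ in $(\alpha,s,c)$ are read off from its explicit form.

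The hard part is the passage from the crude estimate $h^2 \le \tfrac12\chi^2$, which only yields the constant $\tfrac14$, to the sharp $\tfrac1{16}$ and the uniform $1/3$. This forces one to work with the refined square-root expansion and to account honestly for (a) the events on which $R$ is substantially negative, where the naive quadratic Taylor bound is invalid, and (b) the higher-order terms in $\EE_\epsilon[(1+\sum_j\epsilon_j\alpha_j)^n]$, showing both are lower-order as $\alpha,s,c\to 0$ while remaining uniformly bounded. The per-cell product structure of $\bar g$, and the fact that only cells with at least two samples matter, are what keep this accounting tractable.
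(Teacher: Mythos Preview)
The paper does not prove this theorem. It is quoted verbatim as Theorem~1 of Birg\'e and Massart~\cite{birge1995estimation} and used as a black box in the proof of the lower bound (Theorem~\ref{thm:l2_lower}): the authors simply verify that their construction $v_j = K u_j/p_0$ satisfies hypotheses (i)--(iv), read off $\alpha_j \le K^2\kappa_l/m$, and plug into the conclusion to obtain $h^2 \le C n^2 K^4/m$. There is therefore no proof in the paper to compare your proposal against.

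Your sketch is a plausible outline of how the Birg\'e--Massart argument goes --- the product structure over cells after averaging the Rademacher signs, the reduction to $\chi^2$ via the square-root expansion, and the identification of the leading $\binom{n}{2}\sum_j\alpha_j^2$ term are the right ingredients. But since the paper treats the result as imported, a faithful comparison is not possible here; if you want to check your argument you should consult~\cite{birge1995estimation} directly.
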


The exact same bound on the hellinger distances holds for the measures $P_0^n\times Q_0^n$ against $\overline{P^n}\times Q^n$.
Defining $v_j = Ku_j/p_0$ then the densities we used in our construction meet the specification in the above theorem. 
We immediately satisfy the first three requirements and we have $\int v_j^2 p = K^2 \int u_j^2/p \le K^2 \kappa_l/m \triangleq \alpha_j$. 
Thus we have the hellinger bound of:
\begin{align*}
h^2(P_0^n\times Q_0^n,\overline{P^n}\times Q^n) \le (1/3)n^2\sum_{j=1}^m \alpha_j^2 \le \frac{Cn^2K^4}{m}
\end{align*}

We lastly have to make sure that the $p_\lambda$ functions satisfy the bounded variation assumption. 
This follows from an application of the triangle inequality provided that $\|D^ru_j\|_1 \le O(m^{r/d-1})$.
\begin{align*}
\|D^r p_\lambda\|_1 &= \|D^r p + K \sum_{j=1}^m \lambda_j D^ru_j\|_1 \le \|D^rp\| + K\sum_{j=1}^m \|D^ru_j\|_1 \le \|D^rp\| + K \sum_{j=1}^m \|D^ru_j\|_1
\le \|D^rp\| + O(Km^{r/d})
\end{align*}
So as long as $K \asymp m^{-r/d}$ and there is some wiggle room around the bounded variation assumption for $p$, $p_\lambda$ will meet the bounded variation assumption.

Before we construct the $u_j$s, we put everything together. 
We must select $K \asymp m^{-\beta/d}$ so that $p_\lambda \in \Wcal_1^\beta(C)$, and then to make the hellinger distance $O(1)$, we must set $m \asymp n^{\frac{2d}{4\beta+d}}$.
This makes $K^2 \asymp n^{\frac{-4\beta}{4\beta+d}}$ which is precisely the lower bound on the convergence rate in absolute error. 

Lastly we present the construction of the $u_j$ functions. 
The construction is identical to the one used by Krishnamurthy et al~\cite{krishnamurthy2014nonparametric}, but we must make some modifications to ensure that bounded variation condition is satisfied. 
We reproduce the details here for completeness. 

Let $\{\phi_j\}_{j=1}^q$ be an orthonormal collection of functions for $L^2([0,1]^d)$ with $q \ge 4$. 
We can choose $\phi_j$ to satisfy (i) $\phi_1 = 1$, (ii) $\phi_j(x) = 0$ for $x | B(x,\epsilon) \not\subset [0,1]^d$ and (iii) $\|D^r\phi_j\|_{\infty} \le \kappa < \infty$ for all $j$. 
Certainly we can find such an orthonormal system. 

Now for any pair of function $f,g \in L^2([0,1]^d)$, we can find a unit-normed function in $\tilde{w} \in \textrm{span}(\phi_j)$ such that $\tilde{w} \perp \phi_1, \tilde{w} \perp f, \tilde{w} \perp g$. 
If we write $\tilde{w} = \sum_j c_j\phi_j$, we have $D^r\tilde{w} = \sum_j c_i D^r\phi_j$ so that $\|D^r\tilde{w}\|_{\infty} \le \kappa \sum |c_i| \le \kappa \sqrt{q}$ since $\tilde{w}$ is unit normed. 
Thus the vector $w = \tilde{w}/(K\sqrt{q})$ has $\ell_2$ norm equal to $(K\sqrt{q})^{-1}$ while have $\|D^rw\|_{\infty} \le 1$ for all tuples $r$.

For the $u_j$ functions, we use the partition $A_j = \prod_{i=1}^d[j_i m^{-1/d}, (j_i + 1)m^{-1/d}]$ where $j = (j_1, \ldots, j_d)$ and $j_i \in [m^{1/d}]$ for each $i$.
Map $A_j$ to $[0,1]^d$ and appropriately map the densities $p,q$ from $A_j$ to $[0,1]^d$. 
We construct $u_j$ by using the construction for $w$ above on the segment of the density corresponding to $A_j$.
In particular, let $w_j$ be the function from above and let $u_j = w_j(m^{1/d}(x - (j_1, \ldots, j_d)))$.
With this rescaling and shift, $u_j \in A_j$, $\textrm{supp}(u_j) \subset \{x | B(x,\epsilon) \in A_j\}$, and $\int u_j^2(x) = m^{-1} \int w_j^2(x) = \Theta(1/m)$. 
For the last property, by a change of variables and H\'{o}lder's inequality, we have:
\begin{align*}
\|D^ru_j\|_1 = \int |D^rw_j(m^{1/d}(x - (j_1, \ldots, j_d)))| d\mu(x) = \frac{1}{m} \int \|m^{r/d} D^r w_j(y)\| dA_j(y) \le m^{r/d-1}.
\end{align*}
Thus these function $u_j$ meet all of the requirements. 

\section{Proof of Lemma~\ref{lem:var_est}}
Recall that the asymptotic variance of the estimator is:
\begin{align*}
\sigma^2 = 4\left(\Var_{X \sim p}(p(X)) + \Var_{Y \sim q}(q(Y)) + \Var_{X \sim p}(q(X)) + \Var_{Y \sim q}(p(X)\right),
\end{align*}
and our estimator $\hat{\sigma}^2$ is formed by simply plugging in kernel density estimates $\hat{p}, \hat{q}$ for all occurences of the densities.
We will first bound:
\begin{align*}
\EE_{X_1^n,Y_1^n}\left[ | \sigma^2 - \hat{\sigma}^2 |\right]= O(n^{\frac{-\beta}{2\beta+d}}),
\end{align*}
and our high probability bound will follow from Markov's inequality. 
We will show the following bounds, and the expected $\ell_1$ bound will follow by application of the triangle inequality.
Below, let $f,g \in \Wcal_1^\beta(C)$ be any two densities; we will interchangeably substitute $p,q$ for $f,g$. 
\begin{align}
\EE\left[\left| \int \hat{f}^3 - \int f^3 \right| \right] \le O\left(h^{\beta} + \frac{1}{(nh^d)^{1/2}}\right) \label{eq:cubic} \\
\EE\left[\left| \left(\int \hat{f}^2\right)^2 - \left(\int f^2\right)^2\right|\right] \le O\left(h^{2\beta} + \frac{1}{\sqrt{n}} + \frac{1}{nh^{d/2}}\right) \label{eq:squared_squared}\\
\EE\left[\left| \int \hat{f}^2\hat{g} - \int f^2g \right|\right] \le O\left(h^{\beta} + \frac{1}{\sqrt{nh^d}}\right)\label{eq:cross_cubic}\\
\EE\left[\left| \left(\int \hat{f}\hat{g}\right)^2 - \left(\int fg\right)^2 \right|\right] \le O\left(h^{2\beta} + \frac{1}{\sqrt{n}} + \frac{1}{nh^{d/2}}\right) \label{eq:cross_squared}
\end{align}

Before establishing the above inequalities, let us conclude the proof.
The overall rate of convergence in absolute loss is $O(h^\beta + \frac{1}{\sqrt{nh^d}})$. 
TBy choosing $h \asymp n^{\frac{-1}{2\beta+d}}$, the rate of convergence is $O(n^{\frac{-\beta}{2\beta+d}})$.
Finally we wrap up with an application of Markov's Inequality.

Now we turn to establishing the bounds.
For Equation~\ref{eq:cubic}, we can write:
\begin{align*}
\EE\left[\left| \int \hat{f}^3 - \int f^3 \right| \right] &\le \EE \|\hat{f} - f\|_3^3 + 3 \EE\left[\int|f(x) \hat{f}(x)(f(x) - \hat{f}(x))|d\mu(x)\right]\\
& \le \EE\|f - \hat{f}\|_3^3 + 3\EE \|f - \hat{f}\|_{\infty} \|f \hat{f}\|_1\\
& \le O\left(h^{3\beta} + \frac{1}{(nh^d)^{3/2}} + h^{\beta} + \frac{1}{(nh^d)^{1/2}}\right).
\end{align*}
The first step is a fairly straightforward expansion followed by the triangle inequality while in the second step we apply H\"{o}lder's inequality.
The last step follows from well known analysis on the rate of convergence of the kernel density estimator. 

For Equation~\ref{eq:squared_squared} we should actually use the $U$-statistic estimator for $\theta_p$ that we have been analyzing all along.
The bound above follows from Theorem~\ref{thm:quad_rate} and the following chain of inequalities:
\begin{align*}
\EE\left[\left| \left(\int \hat{f}^2\right)^2 - \left(\int f^2\right)^2\right|\right] &\le \EE\left[ \left(\int \hat{f}^2 - f^2\right)^2\right] + 2\|f\|_2^2 \EE\left[\left|\int \hat{f}^2 - f^2\right|\right]\\
& \le \EE\left[ \left(\int \hat{f}^2 - f^2\right)^2\right] + C \sqrt{\EE\left[\left(\int \hat{f}^2 - f^2\right)^2\right]}\\
& \le O\left(h^{4\beta} + \frac{1}{n} + \frac{1}{n^2h^d} + h^{2\beta} + \frac{1}{\sqrt{n}} + \frac{1}{nh^{d/2}}\right).
\end{align*}
The first inequality is a result of some simple manipulations followed by the triangle inequality and the second step is Jensen's inequality.
We already have a bound on the MSE of the estimator $\hat{\theta}_p - \theta_p$ which gives us the inequality in Equation~\ref{eq:squared_squared}.
Applying that bound leads to the last inequality.

The bound for Equation~\ref{eq:cross_squared} follows from exactly the same argument with an application Theorem~\ref{thm:bilinear_rate} instead of Theorem~\ref{thm:quad_rate} in the last step.
So we simply need to establish Equation~\ref{eq:cross_cubic}.
\begin{align*}
\EE\left[\left| \int \hat{f}^2\hat{g} - \int f^2g \right|\right] &= \EE\left[\left|\int (\hat{f}^2 - f^2)\hat{g}\right|\right] + \EE\left[\left|\int f^2 (\hat{g} - g)\right|\right]\\
& \le \EE \|\hat{f}^2 - f^2\|_{2} \|\hat{g}\|_2 + \|f^2\|_2\|\hat{g} - g\|_2\\
& \le \EE \|\hat{f}^2 -f^2\|_2 (\|\hat{g} - g\|_2 + \|g\|_2) + \|f^2\|_2\|\hat{g} - g\|_2\\
& \le O\left(h^{2\beta} + \frac{1}{nh^{d/2}} + \frac{1}{\sqrt{n}} + h^{\beta} + \frac{1}{\sqrt{nh^d}}\right).
\end{align*}
Here we use that $\|\hat{g}\|_1 = 1$ and that $\|f^2\|_2$ and $\|g\|_2$ are both bounded.
We use the standard rate of convergence analysis of the kernel density estimator to bound $\EE \|\hat{g} - g\|_2 \le O(h^\beta + (nh^d)^{-1})$.
We finally use Theorem~\ref{thm:quad_rate} to bound $\|\hat{f}^2- f^2\|_2$.
Note that we are exploiting independence between the samples for $\hat{f}$ and $\hat{g}$ to push the expectation inside of the product in the first term.
In the last line we omitted the term $\EE \|\hat{f}^2 - f^2\|_2\|\hat{g} - g\|_2$ since it converges much faster than the other two terms.

To prove the second bound, we show that $\bar{\sigma}^2$ is close to $\sigma^2$.
We just have to look at two forms:
\begin{align*}
T_1 = \int \bar{p}^2(x)p(x) - \int p^3(x) \qquad T_2 = \left(\int \bar{p}(x)p(x)\right)^2 - \left(\int p^2(x)\right)^2.
\end{align*}
For $T_1$ we can write:
\begin{align*}
T_1 &= \int (\bar{p}^2(x) - p^2(x)) p(x) = \int (\bar{p}(x) - p(x))(\bar{p}(x) - p(x) + 2p(x)) p(x)\\
& = \int (\bar{p}(x) - p(x))^2 p(x) + 2 \int p^2(x)(\bar{p}(x) - p(x))\\
& \le \left(\sup_x|\bar{p}(x)  - p(x)|\right)^2 + 2 \|p\|_2^2 \sup_x |\bar{p}(x) - p(x)| \le O(h^{2\beta} + h^\beta),
\end{align*}
since $p$ is $L_2$-integrable and the kernel density estimator has point-wise bias $O(h^\beta)$.

For $T_2$ we have:
\begin{align*}
T_2 &= \left(\int (\bar{p}(x) - p(x))p(x)\right)^2 + 2\left(\int p^2(x)\right)^2\left(\int (\bar{p}(x) - p(x))p(x)\right)\\
& \le \left(\sup_x |\bar{p}(x) - p(x)|\right)^2 + 2\|p\|_2^4 \sup_x\|\bar{p}(x) - p(x)\| \le O(h^{2\beta} + h^\beta).
\end{align*}

Wwith $h \asymp n^{\frac{-1}{2\beta+d}}$ the additional bias incurred is:
\begin{align*}
\EE \left|\hat{\sigma}^2 - \bar{\sigma}^2\right| &\le \EE \left|\hat{\sigma}^2 - \sigma^2\right| + \left|\sigma^2 - \bar{\sigma}^2\right|
\le O(n^{\frac{-\beta}{2\beta+d}}).
\end{align*}
and so $\hat{\sigma}^2$ is an equally good estimator of $\sigma^2$ and $\bar{\sigma}^2$ (up to constants). 

\section{A Convolution Lemma}
In this section we show that bounded-variation smoothness is additive under convolution. 
\begin{lemma}
If $f,g  \in \Wcal_1^\beta(\RR^d, C)$, 
then $h = f\star g \in \Wcal_1^{2\beta}(\RR^d, C^2)$. 
\label{lem:convolution_smoothness}
\end{lemma}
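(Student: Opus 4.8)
The plan is to deduce the lemma from two facts: (i) differentiation distributes over convolution, i.e.\ $D^r(f\star g) = (D^{r'}f)\star(D^{r''}g)$ whenever the multi-index $r$ is split as $r = r'+r''$ (coordinatewise, with nonnegative entries); and (ii) Young's convolution inequality $\|u\star v\|_1 \le \|u\|_1\|v\|_1$ for $u,v\in L^1(\RR^d)$. Fact (i) is obtained by iterating the one-dimensional statement that if $u\in L^1$ and $v$ has a weak derivative in $L^1$ then $u\star v$ has a weak derivative equal to $u\star \partial v$, which again lies in $L^1$ by Young; since $f,g$ and all their partials of order at most $\beta$ belong to $L^1$ by hypothesis, this shows in particular that $h=f\star g$ has weak derivatives of every order up to $2\beta$, and that the displayed identity holds for all $r',r''$ with $|r'|,|r''|\le\beta$. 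Making this differentiation-under-the-convolution manipulation rigorous is the only step that needs any care, and it is a standard exercise; for the paper's intended use (a Taylor expansion with $L^1$-integrable remainder) the weak-derivative version is precisely what is needed.

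Next I would record the elementary combinatorial fact that every multi-index $r$ with $|r|\triangleq\sum_j r_j \le 2\beta$ admits a splitting $r = r'+r''$ with $|r'|\le\beta$ and $|r''|\le\beta$. To see this, note that the integer interval $[\max(0,|r|-\beta),\,\beta]$ is nonempty precisely because $|r|\le 2\beta$; pick any $k$ in it and form $r'$ by removing units from the coordinates of $r$ one at a time until $\sum_j r'_j = k$. Then $r'' \triangleq r - r'$ has nonnegative entries and $|r''| = |r|-k \le \beta$ by the choice of $k$.

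Combining the pieces, fix any multi-index $r$ with $|r|\le 2\beta$, choose a splitting $r=r'+r''$ as above, and estimate
\[
\|D^r h\|_1 = \big\|(D^{r'}f)\star(D^{r''}g)\big\|_1 \le \|D^{r'}f\|_1\,\|D^{r''}g\|_1 \le C\cdot C = C^2,
\]
where the first equality is fact (i), the inequality is Young's inequality (ii), and the last bound uses $f,g\in\Wcal_1^\beta(C)$ together with $|r'|,|r''|\le\beta$. Since $r$ was arbitrary of order at most $2\beta$, this gives $h\in\Wcal_1^{2\beta}(C^2)$. The main (mild) obstacle is the justification of fact (i); everything else is bookkeeping.
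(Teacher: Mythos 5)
Your proposal is correct and follows essentially the same route as the paper's proof: distribute the derivative operator across the convolution so that $D^r(f\star g)=(D^{r'}f)\star(D^{r''}g)$, then apply Young's inequality to bound $\|D^r h\|_1\le C^2$. The only difference is that you spell out the multi-index splitting $r=r'+r''$ with $|r'|,|r''|\le\beta$ explicitly, whereas the paper states the one-dimensional identity and asserts the extension to higher dimensions ``mutatis mutandis.''
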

\begin{proof}
The proof uses the fact that:
\[
\frac{\partial h(x)}{\partial x} = \left(\frac{\partial f}{\partial x} \star g\right)(x)
\]
which follows by pushing the derivative operator inside of the integral and continuity of $f, g$ and their derivatives.
Using the above identity, we have:
\[
\frac{\partial^{2\beta}h(x)}{\partial x^{2\beta}} = \left(\frac{\partial^\beta f}{\partial x^\beta} \star \frac{\partial^\beta g}{\partial x^\beta}\right)(x),
\]
or more concisely:
\[
\|h^{(2\beta)}\|_1 = \|f^{(\beta)}\star g^{(\beta)}\|_1 \le \|f^{(\beta)}\|_1 \|g^{(\beta)}\|_1 \le C^2.
\]
The first inequality is Young's inequality. This implies that $L_1$ is closed under convolution. 

It is clear, by the fact that derivatives can be distributed across the convolution that for $k < 2\beta$, $D^kh \in L^1$.
This proof strategy extends mutatis mutandis to higher dimension. 
\end{proof}

\end{document}